\definecolor{mydarkblue}{rgb}{0,0.08,0.45} 
\def\eqref#1{equation~\ref{#1}}
\def\1{\bm{1}}
\def\eps{{\epsilon}}
\def\vtheta{{\bm{\theta}}}
\def\vx{{\bm{x}}}
\newcommand{\vepsilon}{\boldsymbol{\epsilon}}
\DeclareMathAlphabet{\mathsfit}{\encodingdefault}{\sfdefault}{m}{sl}
\SetMathAlphabet{\mathsfit}{bold}{\encodingdefault}{\sfdefault}{bx}{n}
\crefname{appendix}{App.}{Apps.}
\crefname{figure}{Fig.}{Figs.}
\crefname{equation}{Eq.}{Eqs.}
\crefname{section}{Sec.}{Secs.}
\crefname{algorithm}{Alg.}{Algs.}
\pgfplotsset{compat=1.18}
\newenvironment{talign*}
    {\csname align*\endcsname}
    {\endalign}
\newenvironment{talign}
    {\csname align\endcsname}
    {\endalign}
\crefname{talign}{}{}
\crefname{equation}{}{}
\renewcommand{\mid}{\,|\,}
\renewcommand{\paragraph}[1]{\noindent\textbf{#1}~~}
\theoremstyle{plain}
\theoremstyle{definition}
\theoremstyle{remark}
\title{Softly Constrained Denoisers for Diffusion Models Applied to Partial Differential Equations}
\author{%
Victor M.\ Yeom-Song$^{1,2}$ \quad Severi Rissanen$^{1,2}$ 
\\ \textbf{Arno Solin}$^{1,2}$ \quad \textbf{Samuel Kaski}$^{1,2,3}$ \quad \textbf{Mingfei Sun}$^{3}$\\
$^1$ELLIS Institute Finland \quad $^2$Aalto University \quad $^3$University of Manchester\\
\texttt{\{victor.yeomsong,severi.rissanen,arno.solin,samuel.kaski\}@aalto.fi}\\
\texttt{mingfei.sun@manchester.ac.uk}
}
\begin{document}

\maketitle
\vspace{-2em}
\begin{abstract}
  Diffusion models have become a powerful generative prior for solutions of partial differential equations (PDEs). Existing approaches enforce physical constraints either by adding the PDE residuals as loss regularizers or through inference-time adjustments. These methods bias the model away from the true data distribution, which is especially problematic when the governing PDE is misspecified. To circumvent these issues while making the most out of the PDE constraint, we introduce soft inductive biases into the denoiser architecture derived from the PDEs. We show that these \emph{softly constrained denoisers} exploit constraint knowledge to improve compliance over standard denoisers, while maintaining enough flexibility to deviate from it in case of misspecification with respect to observed data.\looseness-3
\end{abstract}

\section{Introduction}

Diffusion models \citep{ho_denoising_2020, song_score-based_2021} have become a powerful tool for generating solutions of partial differential equations (PDEs), with applications in fluid dynamics, wave propagation, and material science \citep{jacobsen_cocogen_2025, bastek_physics-informed_2025, huang2024diffusionpde, yao2025guided}. A central challenge in these applications is incorporating information from the governing PDE into the generative model. This is made harder by the fact that PDE models are inherently approximate: Darcy's law assumes laminar steady-state flow, the Helmholtz equation depends on accurate wave number measurements, and more broadly, any mathematical model involves simplifying assumptions that may not hold exactly for the observed data \citep{finzi_soft_2021, zou_correctingpinn_2024}.

\citet{finzi_soft_2021} tackle constraint misspecification by equipping neural networks with \emph{soft inductive biases}: structural preferences toward constraint-compliant solutions that do not restrict the hypothesis space. Such networks exploit approximate mathematical models where they are informative, while remaining free to follow the data where the model is wrong (\cref{fig:toySetup}).

\begin{figure}[h!]

\def\eps{0.25}        
\def\Delta{1.20}      
\def\thetaZero{pi/2}  
\def\cc{0.60}         
\def\kk{5}            

\pgfmathdeclarefunction{wrap}{1}{
  \pgfmathparse{mod(#1 + pi, 2*pi) - pi}%
}
\pgfmathdeclarefunction{pbump}{2}{
  \pgfmathparse{abs(#1) < 1 ? (1 - #1*#1)^#2 : 0}%
}
\pgfmathdeclarefunction{rr}{1}{%
  \pgfmathsetmacro{\u}{wrap(#1-\thetaZero)/\Delta}%
  \pgfmathsetmacro{\B}{pbump(\u,\kk)}%
  \pgfmathparse{1 - \eps * \B * (1 + \cc*(1 - 2*\u*\u))}%
}

\begin{subfigure}[t]{.245\textwidth}
\centering
\begin{tikzpicture}
\begin{axis}[
  axis equal image, hide axis,
  xmin=-1.3, xmax=1.3, ymin=-1.3, ymax=1.3,
  width=.95\textwidth,
  height=.95\textwidth,  
  scale only axis
]

\addplot [domain=0:2*pi, samples=1200, line width=1.5pt, red!40]
  ({ rr(x) * cos(deg(x)) }, { rr(x) * sin(deg(x)) });

\addplot [domain=0:2*pi, samples=500, dashed]
  ({cos(deg(x))}, {sin(deg(x))});

\addplot [
  domain=pi/6:5*pi/6, samples=200, draw=none,
  postaction={decorate},
  decoration={
    text along path,
    reverse path,                 
    text align={center},
    raise=1.0ex,                  
    text={|\scriptsize|Simplified constraint}
  }
] ({cos(deg(x))}, {sin(deg(x))});

\node[font=\scriptsize\bf,color=red!40,align=center] at (axis cs: 0, -.5) {True data \\ distribution};

\end{axis}
\end{tikzpicture}
\caption{Problem setup} \label{fig:toySetup}
\end{subfigure}
\begin{subfigure}[t]{.245\textwidth}
  \centering
  \includegraphics[width=.85\linewidth,trim=0 15 0 15,clip]{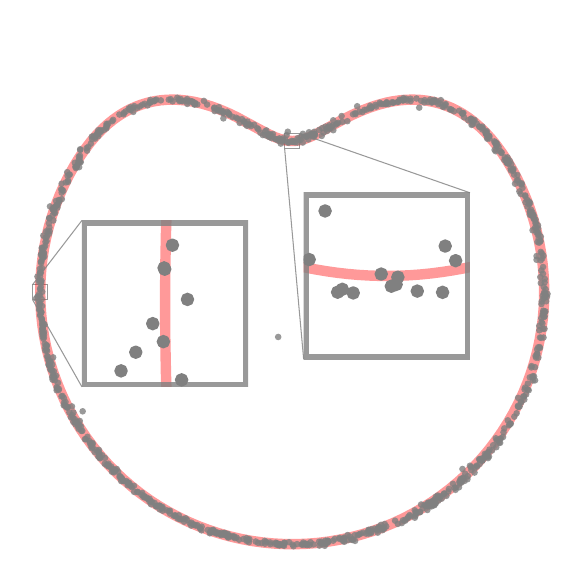}
  \caption{Vanilla Diffusion}
\end{subfigure}
\begin{subfigure}[t]{.245\textwidth}
  \centering
  \includegraphics[width=.85\linewidth,trim=0 15 0 15,clip]{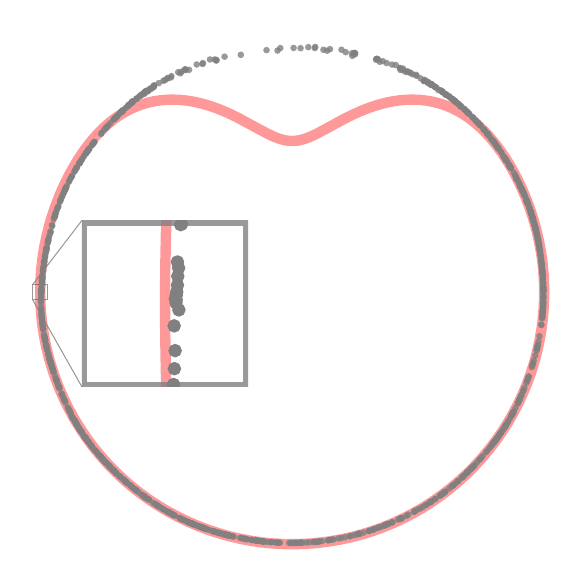}
  \caption{\centering Regularizer baseline} \label{fig:toyPIDM}
\end{subfigure}
\begin{subfigure}[t]{.245\textwidth}
  \centering
  \includegraphics[width=.85\linewidth,trim=0 15 0 15,clip]{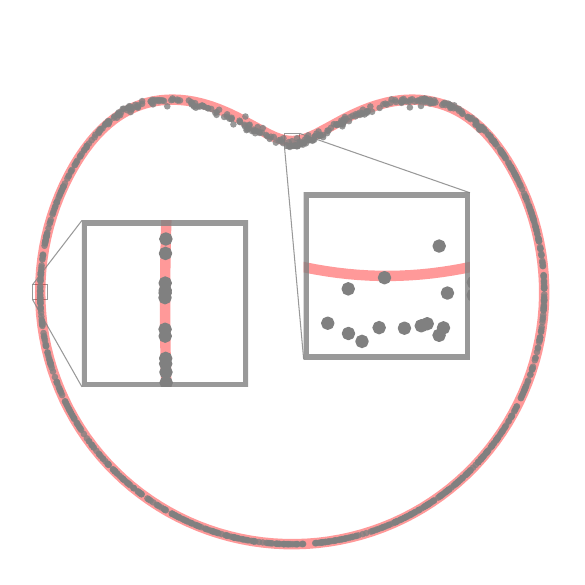}
  \caption{\centering Softly Constrained Denoisers (ours)} \label{fig:toyOurs}
\end{subfigure}
\caption{
Mathematical constraints, even if partially inaccurate, are useful to \textbf{learn an underlying data distribution} (a). Purely data-driven learning may miss the true distribution (b). Regularizer-based methods may overcommit to the constraint at the expense of data fidelity (c). Our method (d) exploits the constraint where it is informative, producing samples that align perfectly with the true data distribution (left), and defaults to vanilla behavior where it is not (top).}
\label{fig:dentfig1} \vspace{-2em}
\end{figure}

\begin{wrapfigure}{r}{.4\linewidth}
\raggedleft\scriptsize
\tdplotsetmaincoords{70}{110} 
\begin{tikzpicture}[tdplot_main_coords, scale=1.5]
    \def\axislength{3}
    \def\squaresize{0.5} 
    

    
    \draw[thick, pattern=north west lines, pattern color=gray!70] 
        (0,0,0) -- (\squaresize,0,0) -- (\squaresize,\squaresize,0) -- (0,\squaresize,0) -- cycle;
    
    \draw[thick, pattern=north east lines, pattern color=gray!70] 
        (0,0,0) -- (\squaresize,0,0) -- (\squaresize,0,\squaresize) -- (0,0,\squaresize) -- cycle;
    
    \draw[thick, pattern=crosshatch, pattern color=gray!70] 
        (0,0,0) -- (0,\squaresize,0) -- (0,\squaresize,\squaresize) -- (0,0,\squaresize) -- cycle;
    
    \draw[thick, ->] (0,0,0) -- (\axislength,0,0);
    \draw[thick, ->] (0,0,0) -- (0,2,0);
    \draw[thick, ->] (0,0,0) -- (0,0,1.5);

    \node[anchor=north east] at (2.5,1.5,0) {Denoiser architecture};
    \node[anchor=north west] at (0,1.75,-.1) {Training};
    \node[anchor=south] at (0,0,1.5) {Sampler};
    
    \fill[red!70!black] (2,0,0) circle (3pt) node[anchor=west, yshift=0pt, xshift=8pt, text=red!70!black]{This work};
    \fill[blue!70!black] (0,1.5,0) circle (3pt) node[anchor=south, yshift=8pt, xshift=0pt, text=blue!70!black]{Regularizers};
    \fill[green!70!black] (0,0,1.) circle (3pt) node[anchor=west, yshift=0pt, xshift=8pt, text=green!70!black]{Inference time adjustments};
    
    
\end{tikzpicture}
    \caption{Previous works have focused on the training \cite{bastek_physics-informed_2025} and sampling \cite{huang2024diffusionpde} axes of the design space to enforce constraints. We present an orthogonal approach to enforce soft constraints through the denoiser architecture.}
    \label{fig:contributionDiag}
\end{wrapfigure}
We bring this idea to the diffusion model setting, where it has not previously been explored. The main challenge in doing so is that diffusion models have a specific structure: the denoiser must approximate the conditional mean to guarantee correct score estimation, and it is not obvious how to introduce a soft bias without breaking this connection. We derive a principled answer from the guidance literature: a correction term based on the constraint gradient, scaled by a learned factor that preserves the denoising score matching optimum while providing a structural shortcut toward constraint-compliant outputs. We call the resulting architecture the \emph{Softly Constrained Denoiser} (SCD). Unlike regularizer-based approaches \citep{bastek_physics-informed_2025}, which bias the generative distribution away from the data (as we prove in \cref{prop:elbo_increase} and illustrate in \cref{fig:toyPIDM}), and unlike inference-time guidance \citep{jacobsen_cocogen_2025, huang2024diffusionpde}, which introduces approximation errors during sampling, SCD operates on the architecture axis of the diffusion model design space (\cref{fig:contributionDiag}): the training loss and sampling procedure remain unchanged, so the model retains the flexibility to follow the data where the constraint is wrong (\cref{fig:toyOurs}).

To summarize, the contributions of this paper are:
\begin{itemize}[leftmargin=2em]
     \item We propose a method to transform any denoiser to a ``softly constrained denoiser'' with a soft inductive bias towards constraint-compliant samples with minimal computational overhead.
     \item We prove that previous regularizer methods break the distribution-modeling guarantees of standard diffusion models, and empirically show that these methods are particularly sensitive to constraint misspecification. Critically, we show that SCDs preserve these guarantees.
     \item We empirically demonstrate the effectiveness of our approach through illustrative problems and PDE benchmarks in misspecified settings, showing superior constraint satisfaction over existing methods, while maintaining sample quality and robustness to misspecification.
\end{itemize}

\section{Background}
\label{section:background}

Diffusion models generate samples from a data distribution $p(\bm{x}_{0})$ by learning how to denoise samples from a forward \emph{noising} process \citep{sohl-dickstein_deep_2015, ho_denoising_2020, song_score-based_2021, karras_elucidating_2022}, which is generally assumed to be of the form:
\begin{talign}
    p(\bm{x}_{t}) = \int \mathcal{N}\left( \bm{x}_{t}; \bm{x}_{0}, \sigma(t)^2 \mathbf{I} \right) p(\bm{x}_{0}) \odif{\bm{x}_{0}}.
\end{talign}
In simpler terms, clean samples $\bm{x}_{0}$ from the data distribution $p(\bm{x}_{0})=p_{\text{data}}(\bm{x}_{0})$ are corrupted by a Gaussian process $\mathcal{N}(\bm{0}, \sigma(t)^{2}\mathbf{I} )$ at time $t$. The corresponding reverse \textit{denoising} process can be formulated as a probability flow ODE \citep{karras_elucidating_2022}:
\begin{talign}
    \odif{\bm{x}_{t}} = -\dot{\sigma}(t)\sigma(t)\nabla_{\bm{x}}\log p(\bm{x}_{t}) \odif{t}.\label{eq:prob_flo_ode}
\end{talign}
Starting with a sample from an isotropic Gaussian $\mathcal{N} (\bm{x}_{t}; \bm{x}_{0}, \sigma_{\text{max}}^{2}\mathbf{I})$ and integrating the ODE backwards in time, it is possible to recover a sample from the original data distribution $\bm{x} \sim p(\bm{x}_{0})$, as long as the score is learned accurately and $\sigma_{\text{max}}$ is large enough \citep{song_score-based_2021}.
To get $\nabla_{\bm x_t}\log p(\bm x_t)$, we learn a denoiser conditional on noise level $t$ and corrupted samples $\bm x_t$ of training data $\bm x_0$:
\begin{talign}
    \mathcal{L} (\theta) = \mathbb{E}_{t\sim p(t), \bm x_{0}, \bm{x}_{t}} \left[w(t)\|D_\theta(\bm x_t, t) - \bm x_0\|^{2} \right], \label{eq:dsm}
\end{talign}
where $w(t)$ and $p(t)$ define the weighting and sampling frequency of noise levels during training, and $D_{\theta}$ is the diffusion model's denoiser with parameters $\theta$. At convergence, $D_\theta(\bm x_t, t) \approx \mathbb{E}[\bm x_0 \mid \bm x_t]$. Combined with Tweedie's formula $\mathbb{E}[\bm x_0 \mid \bm x_t] = \bm x_t + \sigma(t)^2 \nabla_{\bm x_t} \log p(\bm x_t)$, this ensures that we can recover an approximation of the score $\nabla_{\bm x_t}\log p(\bm x_t)$ with $s_\theta(\bm x_t) = \frac{D_\theta(\bm x_t, t) - \bm x_t}{\sigma(t)^2}$. Accordingly, the loss in  \cref{eq:dsm} is also called the \emph{denoising score matching} loss~\citep{vincent_connection_2011, song_score-based_2021}.

\paragraph{Distributional bias}
The core problem in generative modeling is to learn a surrogate distribution $p_{\theta}(\bm{x})$ parameterized by $\theta$ to approximate a data distribution $p_{\text{data}}(\bm{x})$ \citep{tomczak2022deep}. We call a generative framework \emph{biased} if $p_{\theta}(\bm{x})$ does not converge to $p_{\text{data}}(\bm{x})$ under optimal conditions, i.e., after finding the global optimum of the loss with infinite data, the sampling procedure does not result in samples from $p_{\text{data}}(\bm{x})$. Importantly, the diffusion model training and sampling in \cref{eq:dsm} and \cref{eq:prob_flo_ode} is unbiased in this sense and can thus approximate any data distribution. 

\paragraph{Constraint misspecification}
In science, we build simplified mathematical models around complex phenomena and systems to facilitate their study and usage. However, since these models are inherently approximations to real phenomena, they are bound to have varying degrees of \emph{misspecification}. Formally, a perfectly specified constraint $c^{*}(\bm x)\in \{0,1\}$, where $\mathcal{X}_{c^{*}} = \{\bm x: c^{*}(\bm x)= 1\}$ is the constraint-satisfying set, has the property that for any $\bm x$, $p_{\text{data}}(\bm x)c^{*}(\bm x) = p_{\text{data}}(\bm x)$. However, often we do not have access to $c^{*}(\bm x)$: we derive some approximation $\tilde{c}(\bm x)$ based on the system under study and limited samples from $\mathcal{X}_{c^{*}}$. Our goal is to exploit $\tilde{c}(\bm x)$ to gather the most information possible about $\mathcal{X}_{c^{*}}$ and lead the model towards the best approximation to $p_{\text{data}}$ under limited resources. Misspecification is an issue especially when the generative framework utilizing the constraint is prone to bias, i.e., it does not have guarantees for converging to the data distribution $p_{\text{data}}$ at some limit. Furthermore, an incorrect constraint could push the model even further away from $p_{\text{data}}$.

\paragraph{Guided generation}
Assume we have a \emph{constraint function} $c(\bm x)$ where the constraint is satisfied when $c(\bm x)=1$ and not satisfied otherwise.
It could be a hard constraint $c(\bm x) \in \{0,1\}$, or a relaxed continuous constraint $c(\bm x) \in [0,1]$. We can turn a trained diffusion model with output distribution $p(\bm x_0)$ into a constrained model with distribution $p(\bm x_0) c(\bm x_0)$ by adjusting the score as follows \citep{song_pseudoinverse-guided_2023, chung_diffusion_2023}:
\begin{talign}
    s_{\text{adjusted}}(\bm x_t) = \nabla_{\bm x_t}\log p(\bm x_t) + \nabla_{\bm x_t} \log \int c(\bm x_0) p(\bm x_0 \mid \bm x_t) \,\text{d}\bm x_0 . \label{eq:guidance_adjustment}
\end{talign}
Many \emph{inference-time} methods, methods that leave model weights unchanged, use the second term on the right-hand side to modify the sampling trajectory \citep{ho_video_2022, song_pseudoinverse-guided_2023, chung_diffusion_2023, song_loss-guided_2023, rissanen_free_2025}. To use this term, we must choose an approximation for $p(\bm{x}_{0}\mid\bm{x}_{t})$ and a scheme to evaluate the integral. In combination with inference-time adjustments, any approximation errors may bias the output distribution.

\paragraph{Regularization}
Another approach to constrain the generative space of a model is to use \emph{regularizers} on the training objective. \citet{bastek_physics-informed_2025} introduce a residual-based regularizer to the standard DDPM training objective \citep{ho_denoising_2020}, resulting in the following expression:
\begin{talign}
    \mathcal{L}_{\text{target}}\left( \hat{\vx}_{0}, t \right) = \mathcal{L}_{\text{DDPM}} \left( \hat{\vx}_{0} \right) + \lambda_{t} \lVert \mathcal{R}_{\text{constraint}}\left( \hat{\vx}_{0} \right) \rVert, \label{eqn:regularized_loss}
\end{talign}
where $\hat{\vx}_{0} = D_{\vtheta}(\vx_{t},t)$ is the output of the denoiser, $\lambda_{t} \geq 0$ is a hyperparameter that weighs the constraint compliance at denoising time step $t$, $\mathcal{R}$ is a residual used to evaluate the denoiser estimate, and $\lVert \cdot \rVert$ is some scalar norm of choice, e.g., $L_{p}$. Although the target data distribution should naturally minimize this residual, PDE-based regularizers can make the loss landscape hard to optimize \citep{Krishnapriyan_characterizing_2021, rathore_challenges_2024}. Further, the optimum of \cref{eqn:regularized_loss} forfeits the property that $D_\theta(\bm x_t, t) \approx \mathbb{E}[\bm x_0 \mid \bm x_t]$ at convergence and the connection between the denoiser and $\nabla_{\bm x_t} \log p(\bm x_t)$ is lost (see \cref{prop:shifted_optimum}). Thus, the addition of these targets in the loss function biases the generative distribution \citep{bastek_physics-informed_2025, baldan_flow_2025}.
\section{Methods}
\label{sec:methods}

We introduce a denoiser parameterization that exploits PDE knowledge to improve sample quality without sacrificing the standard diffusion model distributional guarantees. The denoiser is decomposed into a base network $D^{\text{orig}}_\theta(\bm{x}_t, t)$ that learns from data, and a constraint-informed correction derived from the PDE, controlled by a learned factor $\gamma_\theta(\bm{x}_t, t)$ that modulates how much the model relies on the constraint at each noise level. The combined denoiser (\cref{fig:scd_architecture}) is trained with the standard DSM loss, preserving the optimum $\mathbb{E}[\bm{x}_0 | \bm{x}_t]$. This satisfies two goals simultaneously: {\em (i)}~the model exploits constraint knowledge to improve physical plausibility, even under partial misspecification, and {\em (ii)}~it retains the guarantee of converging to $p_{\text{data}}$. Rather than imposing the constraint externally through a regularizer or guidance adjustments, the denoiser \emph{learns} when and how much to use it.

\begin{figure}[tbp]
    \centering
    \begin{minipage}[t]{0.48\textwidth}
        \vspace{2pt} 
        \footnotesize 
        \begin{algorithm}[H]
        \caption{Loss for SCD\label{alg:scd_training}}
        \begin{algorithmic}[1]
            \REQUIRE Data $p_{\text{data}}(\vx)$, constraint $l_c(\vx)$, noise $\sigma(t)$, weights $w(t)$, noise $p(t)$, network parameters $\vtheta$
            \STATE Sample $\vx_0, t, \vepsilon$
            \STATE $\vx_t \leftarrow \vx_0 + \sigma(t)\vepsilon$
            
            \STATE \textbf{Compute SCD Output $\hat{\vx}_{\vtheta}$:}
            \STATE $\hat{\vx}_{base} \leftarrow D_{\vtheta}^{orig}(\vx_t, t)$
            \STATE $g \leftarrow \nabla_{\vx} \log l_c(\hat{\vx}_{base})$
            \STATE $\hat{\eta} \leftarrow \gamma_{\vtheta}(\vx_t, t) \sigma(t)^2$
            \STATE $\hat{\vx}_{\vtheta} \leftarrow \hat{\vx}_{base} + \hat{\eta} \cdot g$ 
            
            \STATE \textbf{Output Loss:} 
            \STATE $\mathcal{L} \leftarrow w(t) \| \hat{\vx}_{\vtheta}(\vx_t, t) - \vx_0 \|^2$
        \end{algorithmic}
    \end{algorithm}
    \end{minipage}
    \hfill
    \begin{minipage}[t]{0.48\textwidth}
    \vspace{0pt} 
    \centering
    \resizebox{\columnwidth}{!}{
    \begin{tikzpicture}[
        node distance=1.2cm and 1.2cm, 
        >={Latex[width=2mm,length=2mm]},
        block/.style={
            rectangle, 
            draw=black!80, 
            fill=blue!5, 
            thick, 
            minimum height=1cm, 
            minimum width=2.2cm, 
            align=center,
            rounded corners
        },
        op/.style={
            circle,
            draw=black!80,
            fill=orange!10,
            thick,
            minimum size=0.8cm,
            inner sep=0pt
        },
        param/.style={
            rectangle,
            draw=green!60!black,
            fill=green!5,
            dashed,
            minimum height=0.8cm,
            align=center,
            rounded corners
        },
        txt/.style={
            font=\small\sffamily
        }
    ]
    
    \node (input) [txt, font=\large] {$\mathbf{x}_t, t$};
    
    \node (base) [block, right=0.8cm of input, anchor=west] {Base Denoiser\\$D_{\boldsymbol{\theta}}^{orig}(\mathbf{x}_t, t)$};
    \node (xbase) [right=0.8cm of base, txt] {$\hat{\mathbf{x}}_{base}$};
    \node (add) [op, right=0.8cm of xbase] {$+$};
    \node (output) [txt, right=0.6cm of add, font=\large] {$\hat{\mathbf{x}}_{\boldsymbol{\theta}}$};
    
    \node (grad) [block, below=0.6cm of xbase, fill=red!5] {Constraint Gradient\\$\nabla_{\mathbf{x}} \log l_c(\cdot)$};
    
    \node (gamma) [block, below=2.2cm of base] {Scaling Network\\$\gamma_{\boldsymbol{\theta}}(\mathbf{x}_t, t)$};
    
    \node (mult) [op] at (gamma -| add) {$\times$};
    
    \node (sigma) [param, below=0.6cm of mult] {Noise Level\\$\sigma(t)^2$};
    
    \draw[->] (input) -- (base);
    \draw[->] (input) |- (gamma);
    \draw[->] (base) -- (xbase);
    \draw[->] (xbase) -- (add);
    \draw[->] (add) -- (output);
    \draw[->] (xbase) -- (grad);
    \draw[->] (grad) -- node[midway, above right=-0.1cm, txt] {$g$} (mult);
    \draw[->] (gamma) -- node[midway, above, txt] {$\hat{\gamma}$} (mult);
    \draw[->] (sigma) -- (mult);
    \draw[->] (mult) -- node[midway, right] {Adjustment} (add);
    
    \begin{scope}[on background layer]
        \node [
            fit=(base) (gamma) (grad) (mult) (sigma) (add), 
            fill=gray!5, 
            rounded corners, 
            draw=gray!20, 
            dashed,
            inner sep=0.3cm,
        ] (group) {};
    \end{scope}
    \end{tikzpicture}%
    } 
        \captionof{figure}{Architecture of SCD. Using a learnable $\gamma_\theta$, the network is able to modulate how much to exploit the constraint information.}
        \label{fig:scd_architecture}
    \end{minipage}
    \vspace{-1em}
\end{figure}
\paragraph{Class of constraints} We consider constraints that can be written as $\mathcal{R}(\vx)=0$, where $\mathcal{R}$ is a \emph{residual function}. Thus, $c(\vx)=1$ if $\vx \in \{\vx: \mathcal{R}(\vx)=0\}$ and $0$ otherwise. This constraint can be misspecified, but should be informative about the data distribution. We further take the standard assumption that $\mathcal{R}$ is continuously differentiable in the input $\vx$ \citep{huang2024diffusionpde,bastek_physics-informed_2025,yao2025guided}, so we can calculate gradients $\nabla_{\vx}\mathcal{R}(\vx)$. We then define a \emph{relaxed constraint function} $l_c(\vx)=\exp(-\lVert \mathcal{R}(\vx) \rVert)$ for some choice of norm $\lVert \cdot \rVert$, as prescribed by \citet{song_loss-guided_2023}. The norm and more broadly the design of $l_c(\vx)$ are design choices for our method, ideally designed to be maximally informative for the denoiser. 

\paragraph{Deriving the Softly Constrained Denoiser} Let us write the original denoiser output as $D_\theta^{\text{orig}}(\vx_t, t)$ (e.g., the output of a standard U-Net). The aim is to combine $l_c$ with $D_\theta^{\text{orig}}(\vx_t, t)$ such that it becomes easier for the combined network $D_\theta(\vx_t, t)=f(D_\theta^{\text{orig}}(\vx_t, t), l_c(\cdot ))$ to capture relevant details in the output to generating constraint-compliant samples. To achieve this, we ask: what kinds of non-learnable modifications to $D_\theta(\vx_t, t)$ tend to improve constraint satisfaction? We will use the answer to define our final, fully learnable denoiser. Luckily, this question has been extensively studied in the inference-time guidance literature, allowing us to build on well-known results there. 

\paragraph{Nudging $D_\theta(\vx_t, t)$ towards constraint satisfaction}
We integrate the soft inductive biases from the PDE residuals into the denoiser building up from ideas in the guidance literature: we first propose a practical approximation to \cref{eq:guidance_adjustment}. This leads to a formula that nudges the denoiser towards satisfying the constraint by using the gradient of a (relaxed) constraint function $\nabla_{\bm x_0}l_c(\bm x_0)$. We will use this formula as an inspiration for the final network design. 

We choose an approximation to $p(\bm x_0 \mid \bm x_t)$ in \cref{eq:guidance_adjustment}. A common choice is $p(\bm x_0 \mid \bm x_t) = \mathcal{N}( D_\theta(\bm x_t, t), \sigma_{0|t}^{2} \mathbf{I})$, where $\sigma_{0|t}^{2}$ is a hyperparameter \citep{ho_video_2022, song_pseudoinverse-guided_2023, chung_diffusion_2023, boys_tweedie_2024, peng_improving_2024, rissanen_free_2025}. Similarly to \citet{chung_diffusion_2023}, we choose $\sigma_{0|t}^{2} = 0$. Thus, $p(\bm{x}_{0} \mid \bm{x}_{t})$ turns into a Dirac delta on $D_{\theta}(\bm{x}_{t}, t)$ and taking the integral in \cref{eq:guidance_adjustment} with this choice produces the following:
\begin{talign}
\label{eqn:approxConditionalScore}
    s_{\theta}^{\text{guided}}\left( \bm{x}_{t}\right) \approx \frac{D_{\theta}(\bm{x}_{t}, t) - \bm{x}_{t}}{ \sigma_{t}^{2} } + \nabla_{\bm{x}_{t}} \log l_{c} \left( D_{\theta}(\bm{x}_{t}, t) \right).
\end{talign}
This results in a vector-Jacobian product through $D_{\theta}$, with considerable computational overhead:
\begin{talign}
    \nabla_{\bm{x}_{t}} \log l_{c} \left( D_{\theta}(\bm{x}_{t}, t)\right)^\top = \nabla_{D_\theta} \log l_{c} \left( D_{\theta}(\bm{x}_{t}, t)\right)^\top \nabla_{\bm{x}_{t}} D_{\theta}(\bm{x}_{t}, t).\label{eq:guidance_formula_with_jacobian}
\end{talign}
To obtain a more efficient approximation,  we recall that the Jacobian is connected to the denoising covariance through $\nabla_{\bm{x}_{t}} D_{\theta}(\bm{x}_{t}, t)=\frac{\text{Cov}[\bm x_0 \mid \bm x_t]}{\sigma(t)^2}$ \citep{boys_tweedie_2024}. Analogous to our earlier treatment of $p(\bm x_0\mid \bm x_t)$ in \cref{eqn:approxConditionalScore}, we approximate the conditional covariance as $\text{Cov}[\bm x_0 \mid \bm x_t] \approx \bm \Lambda_t$, where $\bm \Lambda_t$ is some diagonal matrix. Altogether, this yields the following denoiser correction:
\begin{talign}
    D_{\theta, \text{guided}}(\bm x_t, t) = D_\theta(\bm x_t, t) + \bm\Lambda_t \sigma(t)^2\nabla_{\bm D_\theta} \log l_c(D_\theta(\bm x_t, t)).\label{eq:guided_denoiser}
\end{talign}

\paragraph{Softly Constrained Denoiser} Since \cref{eq:guided_denoiser} tends to generate samples that satisfy the constraint for a suitably chosen $\bm \Lambda_t$, 
we can form a denoiser parameterization that uses the same structure for easier constraint compliance through a learned $\bm \Lambda_t$ term:
\begin{equation}
    D^{\text{SCD}}_\theta(\bm x_t, t) = D_{\theta}^\text{orig}(\bm x_t, t)
    + \bm \gamma_{\theta}(\bm x_t, t)\sigma(t)^2 \nabla_{\bm D_\theta^\text{orig}} \log l_c(D_{\theta}^\text{orig}(\bm x_t, t)), \label{eq:constDenoiser}
\end{equation}
where $D_{\theta}^{\text{orig}}(\bm x_t, t)$ is the original denoiser output and $\bm{\gamma}_{\theta}(\bm x_t, t)$ is a learnable scaling factor. Details on the parameterization of $\bm\gamma_{\theta}$ are given in \cref{app:training}. The correction term in \cref{eq:constDenoiser} only evaluates the gradient of the constraint $l_{c}$ until $D^{\text{orig}}_{\theta}$, avoiding a costly vector-Jacobian product. In particular, stationarity conditions for the denoising score matching loss with respect to $D^{\text{SCD}}_\theta(\bm x_t, t)$'s parameterization show that the connection to the optimal denoiser is preserved:
\begin{restatable}{proposition}{PropOptimalDenoiserSCD}
    \label{prop:scd_optimal_denoiser}
    Given $D^{\text{SCD}}_\theta(\bm x_t, t)$, the stationarity conditions for the denoising score matching loss with respect to $D_{\theta}^{\text{orig}}$ and $\bm\gamma_{\theta}$ are the same, and connect $D^{\text{SCD}}_\theta(\bm x_t, t)$ to the optimal denoiser:
    \begin{talign}
        D^{\text{SCD}}_\theta = D_{\theta}^{\text{orig}}+\bm\gamma_{\theta}\sigma(t)^{2}\nabla_{D_{\theta}^{\text{orig}}}\log l_c(D_{\theta}^{\text{orig}})=\mathbb{E}[\bm x_{0}| \bm x_{t}] = \hat{\bm x}_{0}^{*}
    \end{talign}
\end{restatable}\vspace{-0.5em}
\begin{restatable}{corollary}{CorolGammaRole}
    \label{prop:gamma_role}
    The term $\bm\gamma_{\theta}\sigma(t)^{2}\nabla_{D_{\theta}^{\text{orig}}}\log l_c(D_{\theta}^{\text{orig}})$ encodes how much of the deviation of $D_{\theta}^{\text{orig}}$ from the optimal denoiser $\hat{\bm x}_{0}^{*}$ is explained by the constraint's gradient. At optimality:
    \begin{talign}
        \bm\gamma_{\theta}\sigma(t)^{2}\nabla_{D_{\theta}^{\text{orig}}}\log l_{c}(D_{\theta}^{\text{orig}})=\text{proj}_{\nabla_{D_{\theta}^{\text{orig}}}\log l_{c}(D_{\theta}^{\text{orig}})}(\hat{\bm x}^{*}_{0}-D_{\theta}^{\text{orig}})
    \end{talign}
\end{restatable}\vspace{-0.5em}
Proofs for these results are in \cref{app:proofs}. While we adopt this particular approximation, many alternative formulations of the guidance formula are possible, each defining a corresponding SCD. We show the training algorithm explicitly in \cref{alg:scd_training} and visualize the new denoiser architecture in \cref{fig:scd_architecture}.

\subsection{Analysis of regularization bias and ELBO degradation}

We formally analyze the impact of introducing a regularizer (e.g. PIDM \citep{bastek_physics-informed_2025}) to the training objective. We demonstrate that while regularization enforces constraint satisfaction, it biases the denoiser away from the true conditional expectation and also necessarily degrades the variational lower bound (ELBO) of the diffusion model \citep{ho_denoising_2020, kingma2021variational}.

Consider the regularized objective function for a specific noise level $t$ with constraint residual $\mathcal{R}$:
\begin{talign}
    \mathcal{L}_{\text{reg}}(\theta) = \mathbb{E}_{\vx_0, \vx_t} \left[ \| D_\theta(\vx_t, t) - \vx_0 \|^2 \right] + \lambda \| \mathcal{R}(D_\theta(\vx_t, t)) \|^2,
\end{talign}
where $\lambda > 0$ weighs the regularizer. In \cref{app:proofs}, we prove the following two propositions:

\begin{restatable}{proposition}{PropShiftedOptimum}
    \label{prop:shifted_optimum}
    Let $D^*_{\text{reg}}(\vx_t, t)$ be the denoiser that minimizes the regularized objective $\mathcal{L}_{\text{reg}}$. The optimal denoiser output is shifted as follows
    \begin{talign}
        D^*_{\text{reg}}(\vx_t, t) = \mathbb{E}[\vx_0 | \vx_t] - \lambda \left[ \nabla_y \mathcal{R}(y) \right]^\top \mathcal{R}(y) \Big|_{y=D^*_{\text{reg}}},
    \end{talign}
    so the optimal denoiser output $D_{\text{reg}}^*(x_t, t)$ is shifted from the conditional mean until the equation holds. Since $\mathbb{E}[x_0\mid x_t]$ does not generally satisfy the constraint, the optimal denoiser output is shifted. 
\end{restatable}
The asymptotic distributional guarantees of diffusion models are thus lost since there is no connection between the optimal denoiser and the score function $\nabla_{\vx_t}\log p(\vx_t)$.

\begin{restatable}{proposition}{PropIncreasedELBO}
    \label{prop:elbo_increase}
    $D^*_{\text{reg}}$ strictly degrades the Evidence Lower Bound (ELBO) compared to the vanilla denoiser $D^*_{\text{vanilla}}$. That is, the model's approximation of the data likelihood deteriorates.
\end{restatable}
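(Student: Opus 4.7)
The plan is to reduce the statement to the variance decomposition of mean squared error, using the standard identification of the weighted denoising score matching loss with the diffusion term of the ELBO \citep{kingma2021variational}, up to an additive constant independent of $\theta$. First I would fix a noise level $t$ and write the per-$t$ ELBO contribution, for the ELBO-consistent positive weight $\tilde{w}(t)$, as $\mathcal{L}_{\text{ELBO}}(D, t) = \tilde{w}(t)\,\mathbb{E}_{x_t}\mathbb{E}_{x_0 \mid x_t}\lVert D(x_t,t) - x_0\rVert^2 + C(t)$, where $C(t)$ does not depend on $D$. The pointwise variance decomposition
\begin{equation*}
    \mathbb{E}_{x_0\mid x_t}\lVert D(x_t,t)-x_0\rVert^2 = \lVert D(x_t,t)-\mathbb{E}[x_0\mid x_t]\rVert^2 + \mathrm{tr}\,\mathrm{Cov}[x_0\mid x_t]
\end{equation*}
then isolates the only $D$-dependent piece, which is uniquely minimized at $D^*_{\text{vanilla}}(x_t,t)=\mathbb{E}[x_0\mid x_t]$, where it vanishes.

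Next I would invoke \cref{prop:shifted_optimum}: at the regularized optimum one has $D^*_{\text{reg}}(x_t,t) = \mathbb{E}[x_0\mid x_t] - \lambda [\nabla_y \mathcal{R}(y)]^{\top} \mathcal{R}(y)\big|_{y=D^*_{\text{reg}}}$, so $D^*_{\text{reg}} \neq D^*_{\text{vanilla}}$ wherever $\mathcal{R}\cdot\nabla \mathcal{R}$ does not vanish. Under the nondegeneracy assumption that the true conditional mean does not already satisfy $\mathcal{R}=0$ almost surely, which is precisely the regime where the regularizer is doing any work, this shift is nonzero on a set of positive $p(x_t)$-measure. Hence the bias-to-mean term is strictly positive at $D^*_{\text{reg}}$ and exactly zero at $D^*_{\text{vanilla}}$; together with positivity of $\tilde{w}(t)$ and integration over $t$ this yields $\mathcal{L}_{\text{ELBO}}(D^*_{\text{reg}}) > \mathcal{L}_{\text{ELBO}}(D^*_{\text{vanilla}})$.

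The main obstacle will be cleanly connecting the weighted training loss to the ELBO itself: the ELBO corresponds to a specific SNR-based positive weighting rather than the generic $w(t)$ used for training, so I would include a brief reduction clarifying that the pointwise bias-variance argument is what matters and that any positive weighting only preserves, and generically strengthens, the strict inequality. The remaining bookkeeping, namely measurability of the shift, carrying strictness through the integral, and ruling out degenerate edge cases such as $\lambda=0$ or $\mathcal{R}\equiv 0$ on $\supp p_{\text{data}}$, is routine once these pieces are in place.
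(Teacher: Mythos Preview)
Your proposal is correct and follows essentially the same approach as the paper: both identify the ELBO loss component with the weighted denoising MSE via \citet{kingma2021variational}, apply the bias--variance decomposition to see that the conditional expectation is the unique pointwise minimizer, and then invoke \cref{prop:shifted_optimum} to conclude a strict increase at $D^*_{\text{reg}}$. You are somewhat more explicit than the paper about the ELBO-specific weighting and the nondegeneracy caveat needed for strictness, but the core argument is identical.
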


\section{Related work}

\citet{karras_elucidating_2022} split the diffusion model design space into the denoiser architecture, training dynamics, and sampling algorithms. Most of the related work in ``diffusion models for PDEs'' has focused on the last two spaces, which we detail here. An extended related work is available in \cref{app:extRelatedWork}.

\paragraph{Diffusion models applied to PDEs } 
\citet{jacobsen_cocogen_2025} propose conditional PDE generation using a Controlnet-like conditioning structure \citep{zhang2023adding} and an inference-time adjustment where the final samples are optimized to have a small PDE residual. 
\citet{bastek_physics-informed_2025} present Physics-Informed Diffusion Models (PIDM), a framework to train DDPM-based diffusion models with a PDE regularizer term to minimize along the loss function.
Several works utilize DPS-like guidance \citep{chung_diffusion_2023} for PDE data assimilation, targeting noisy measurements \citep{shysheya2024conditional}, constraint satisfaction \citep{huang2024diffusionpde}, or Banach space diffusion models \citep{yao2025guided}. Similarly, \citet{chenggradient} employ projection-based sampling \citep{zhu_plugnplayrestore_2023}. Unlike these, our method avoids approximate inference-time guidance. Furthermore, our parameterization is orthogonal to recent architecture-focused works on neural operators \citep{huwavelet, oommen2024integrating} or GNNs \citep{valencia2025learning}, as it remains compatible with any base architecture.

\paragraph{Injecting measurement structure for training inverse problem solvers} Mathematically, the closest work is the likelihood-informed Doob's h-transform by \citet{denker2024deft}, who finetune adapters using observation gradients $\nabla_{\bm x_0} p(\bm y\mid \bm x_0)$, similar to our constraint-informed parameterization using $\nabla_{\bm x_0} l_c(\bm x_0)$. However, our motivation and methodology differ: they use likelihoods $p(\bm y\mid \bm x_0)$ for Bayesian inference from noisy observations, whereas we \emph{define} $l_c(\bm x_0)$ to restrict generation to a constrained subset. Their goal is an alternative to inference-time adjustment, while we seek to alleviate distributional biases and the model's sensitivity to constraint misspecification. Furthermore, we train from scratch, whereas they finetune adapters on larger models.

\paragraph{Inference time adjustment for inverse problems} Many methods adjust diffusion models at inference time to solve inverse problems, formally targeting approximations of \cref{eq:guidance_adjustment}. The first methods to make the explicit connection to \cref{eq:guidance_adjustment} were \citet{ho_video_2022, chung_diffusion_2023, song_pseudoinverse-guided_2023}. While the method by \citet{song_pseudoinverse-guided_2023} only worked for linear inverse problems, \citet{song_loss-guided_2023} generalized it to general guidance functions through Monte Carlo integration of \cref{eq:guidance_adjustment}.

\paragraph{Hard-constraint diffusion via modified dynamics} Several works impose constraints by \emph{changing the diffusion dynamics} so that the support is restricted by construction. Riemannian diffusion and flow matching models move the noising and denoising processes to a target manifold, enabling sampling on spheres, tori, hyperboloids, and matrix groups but requiring smooth geometry and geometric operators \citep{de2022riemannian,huang2022riemannian,chenflow}. \citet{fishmandiffusion, fishman2023metropolis, lou_reflected_2023} propose noising processes that are constrained within convex sets defined by inequality constraints, while \citet{liu_mirror_2023} propose learning standard diffusion models in a dual space created using a mirror map. These methods provide hard constraints but are typically specialized to particular geometries or constraint classes.

\paragraph{Optimizing samples to match with constraints} \citet{benhamu_dflow} generate samples with constraints by optimizing a source point in the noisy latent space such that the generative ODE solution matches with the constraint. \citet{tang2024tuning} instead optimize the noise injected during the stochastic sampling process. \citet{pooledreamfusion} generate samples by directly optimizing the target image within a constrained space, while minimizing the diffusion loss for the image. \citet{wang2023prolificdreamer} extend the method with a particle-based variational framework. In a similar manner, \citet{mardanivariational} formulate the sampling process as optimizing a variational inference distribution on the clean samples. \looseness-2

\paragraph{Soft inductive biases}
\citet{finzi_soft_2021} propose using ``dual path'' layers to build a neural network, where one path uses a hard-constrained layer and the other uses a more ``relaxed'' layer. By assigning a lower prior probability to the relaxed path, a soft inductive bias towards constraint-compliant solutions is imposed, without restricting the possible hypothesis space for the neural network. \looseness-2 
\section{Experiments}
\label{sec:experiments}

\begin{wrapfigure}{r}{.5\textwidth}
  \centering\footnotesize
    \begin{tikzpicture}[remember picture]
\begin{axis}[
    width=.5\columnwidth,
    height=.4\columnwidth,    
    xlabel={Angle of misspecification},
    ylabel={Average $W_1$ distance},
    xmin=-0.2, xmax=3.2,
    ymode=log,
    ymin=0.002, ymax=2,
    xtick={0, 1, 2, 3},
    xticklabels={$0$, $\frac{\pi}{4}$, $\frac{\pi}{3}$, $\frac{\pi}{2}$},
    ytick={0.01, 0.1, 1},
    yticklabels={$10^{-2}$, $10^{-1}$, $1$},
    legend style={
      at={(0.95,0.95)},
      anchor=north east,
      draw=none,
      fill=none,
      legend columns=2,
      cells={align=left},
      align=left,
      legend cell align={left}
    },
    grid=major, grid style={dotted, gray!40},
    thick,
    tick style={thick},
]

\addplot[
    color={rgb:red,0;green,128;blue,128},
    line width=1.5pt,
    error bars/.cd,
    y dir=both,
    y explicit,
] coordinates {
    (0, 0.045) +- (0, 0.016)
    (1, 0.163) +- (0, 0.031)
    (2, 0.188) +- (0, 0.017)
    (3, 0.339) +- (0, 0.019)
};
\addlegendentry{PIDM}

\addplot[
    color={rgb:red,255;green,127;blue,14},
    line width=1.5pt,
    error bars/.cd,
    y dir=both,
    y explicit,
] coordinates {
    (0, 0.0022) +- (0, 0.0001)
    (1, 0.0034) +- (0, 0.0002)
    (2, 0.0044) +- (0, 0.0004)
    (3, 0.0025) +- (0, 0.0002)
};
\addlegendentry{SCD (Ours)}

\addplot[
    color={rgb:red,94;green,79;blue,162},
    line width=1.5pt,
    error bars/.cd,
    y dir=both,
    y explicit,
] coordinates {
    (0, 0.0041) +- (0, 0.0001)
    (1, 0.0045) +- (0, 0.0001)
    (2, 0.0058) +- (0, 0.0002)
    (3, 0.0034) +- (0, 0.0001)
};
\addlegendentry{Vanilla}

\addplot[
    color={rgb:red,77;green,175;blue,174},
    line width=1.5pt,
    error bars/.cd,
    y dir=both,
    y explicit,
] coordinates {
    (0, 0.0039) +- (0, 0.0001)
    (1, 0.0056) +- (0, 0.0001)
    (2, 0.0341) +- (0, 0.0002)
    (3, 0.0669) +- (0, 0.0001)
};
\addlegendentry{DPS}

\node (anchorpoint) at (axis cs: 0.7,0.15) {}; 

\end{axis}
\end{tikzpicture}

\tikz[remember picture,overlay]\node[anchor=south east] at (anchorpoint) {
\begin{minipage}{.08\columnwidth} 
\resizebox{\textwidth}{!}{%
\begin{tikzpicture}
    \def\alphaDeg{45}
    \def\thetaDeg{25}

    \fill[fill=white, opacity=.7,draw=black!80] (-1.5,-1.5) rectangle (1.5,1.5);

    \coordinate (center) at (0,0);
    \coordinate (posAlpha) at ({\alphaDeg}:1);
    \coordinate (negAlpha) at ({-\alphaDeg}:1);
    \coordinate (right) at (1,0);
    
    \coordinate (thetaPoint) at ({cos(\thetaDeg)}, {sin(\thetaDeg)});
    
    \draw[thick, black] (\alphaDeg:1) arc (\alphaDeg:{360-\alphaDeg}:1);
    
    \draw[thick, black] (posAlpha) -- (negAlpha);
    
    \draw[gray, dashed] (-1.1,0) -- (1.1,0) node[right] {$x$};
    \draw[gray, dashed] (0,-1.1) -- (0,1.1) node[above] {$y$};
    
    
    \fill (posAlpha) circle (0pt) node[above right] {$+\alpha$};
    \fill (negAlpha) circle (0pt) node[below right] {$-\alpha$};
    
    
    \draw[thin] (center) -- (posAlpha);
    \draw[thin] (center) -- (negAlpha);
    \draw[thin] (center) -- (right);
    \draw[thin, gray!60] (center) -- (thetaPoint);
    
    \pic[draw, angle radius=0.4cm, angle eccentricity=1.5, "$\phi$", gray!60] 
        {angle = right--center--thetaPoint};
    
    
\end{tikzpicture}}
\end{minipage}%
};
    \vspace*{-1.5em}
    \caption{\emph{Top left:} Illustration of ``Chop''. Mean $W_{1}$ distances with 2 standard deviations on varying degrees of misspecification on ``Chop''. Vanilla and SCD keep steady $W_{1}$ values, indicating they can learn the true data distribution. PIDM and DPS consistently increase with higher misspecification.}
    \label{fig:circlesWass}
    \vspace*{-0.5em}
\end{wrapfigure}

We first showcase and analyze our method on a set of toy examples in \cref{sec:toy}. In \cref{sec:darcy}, we evaluate our method on the Darcy flow PDE, a common benchmark in the diffusion PDE-constrained literature \citep{jacobsen_cocogen_2025, bastek_physics-informed_2025}. Finally, in \cref{sec:helmholtz} we compare our method with FunDPS \citep{yao2025guided} to generate solutions for the Helmholtz equation, highlighting that using two different guidance objectives can heavily bias the distribution using FunDPS. For evaluation, we use $W_{1}$ distance in \cref{sec:toy}, while for \cref{sec:darcy} and \cref{sec:helmholtz} we use NLL to evaluate distributional fidelity and a correctly specified residual to evaluate physical plausibility. Details on training and parameterization are available in \cref{app:training}. We also use a modified loss function based on the EDM loss \citep{karras_elucidating_2022}: we observed that the models had issues refining the fine-grained details at lower noise levels, and modifying the distribution from which the noise levels are sampled significantly improved the results across all models. The details of the modification are available in \cref{app:modifiedLoss}.
\subsection{Illustrative examples}
\label{sec:toy}
We explore properties of our method on new variants of the toy data set introduced by \citet{bastek_physics-informed_2025}, introducing new misspecification modalities. The data is composed of points sampled from the unit circle centered at the origin, and we train a diffusion model to learn to produce samples on the circle. Given a sample $(x,y) \in \mathbb{R}^{2}$, we define a residual function for this setting with the following equation:\looseness-1
\vspace{-0.5em}
\begin{talign}
\label{eqn:circleRes}
    \mathcal{R}_{\text{circle}}(x, y) = \left( \sqrt{x^2 + y^2} - 1 \right)^2,
\end{talign}
The architecture and training details are shown in \cref{appdx:circlesTraining}. We use $l_{c} = \exp(-\mathcal{R}_{\text{circle}}(x, y))$ as our constraint term for these experiments. We evaluate the performance of vanilla diffusion, regularized diffusion (PIDM), DPS guidance and our method on a few examples of misspecification using Wasserstein-1 distance to quantify distributional fidelity. Specifically, we use \cref{eqn:circleRes} with PIDM and our method on variations of the circle, namely a circle with a ``dent'' on top and a circle that is ``chopped'' after a particular $x$ coordinate. For ``Dent'', we use a circle with a polynomial interpolation at the top, producing the shape seen in \cref{fig:toySetup}. Details on the Dent interpolation are available in \cref{appdx:circlesTraining}. For ``Chop'', as illustrated in \cref{fig:circlesWass}, we define an angle threshold $-\alpha < \phi < \alpha$ for which all points on the circle with angle $\phi$ have their $x$ coordinate projected to $\cos{\alpha}$. 

The results of using Vanilla, DPS guidance, PIDM, and our method on the standard circle and the two described misspecifications are summarized in \cref{tab:circlesMethods}. On the unit circle, Vanilla, DPS and PIDM achieve similar values, with SCD achieving a clearly superior average $W_{1}$ distance. For ``Dent'', all models have a higher $W_{1}$ distance compared to the unit circle, though we note that SCD still performs the best. PIDM mostly performs worse than DPS and SCD, observed in \cref{fig:dentfig1}, due the model not placing any samples on the dent. The issue is most prominent with ``Chop'', where the $W_{1}$ distance with PIDM is almost an order of magnitude higher than vanilla and SCD: as visible in \cref{fig:chops}, this is because it only learns the samples on the arc along the circle. 

\begin{wraptable}{l}{.55\textwidth}
    \centering\scriptsize
    \caption{Measured (\textbf{best, lowest}) $W_{1}$ distances from the true data distribution. Means with two standard deviations across 100 estimates with 1000 samples from each method. All values multiplied by $10^{-3}$.}\label{tab:circlesMethods}
    \begin{tabular}{lccc}
        \toprule
        \multicolumn{1}{l}{\bf Method}  & \multicolumn{1}{c}{\bf Unit circle} & \multicolumn{1}{c}{\bf Dent} & \multicolumn{1}{c}{\bf Chop ($\alpha=\frac{\pi}{2}$)}
        \\ \midrule 
        Vanilla & $ 3.91 \pm 0.18 $ & $ 7.53 \pm 0.46 $ & $3.34 \pm 0.11$ \\
        DPS & $3.97\pm0.65$ & $6.27\pm0.63$ & $10.2\pm3.21$ \\
        PIDM & $4.04 \pm 0.20$ & $6.65 \pm 1.44 $ & $33.75 \pm 3.37$ \\
        SCD (ours) & $ \mathbf{2.16 \pm 0.15} $ & $\mathbf{5.6 \pm 0.44}$ & $\mathbf{2.42 \pm 0.16}$ \\
        \bottomrule
    \end{tabular}
\end{wraptable}

To measure the effect of varying misspecifications on the toy data, we vary the angle for ``Chop'' and note the change on the $W_{1}$ distance, pictured in \cref{fig:circlesWass}. Both vanilla diffusion and our method stay relatively steady for all different $\alpha$, while DPS and PIDM steadily increase with higher misspecification. Note the $W_1$ distance for PIDM is worse than vanilla diffusion even when the constraint is correctly specified at $\alpha=0$, due to the distribution being biased \emph{along the circle} even if the constraint is satisfied. Similar issues were visually noticed in \cite{bastek_physics-informed_2025} when using a high regularizer weight. 
\subsection{Darcy Flow}
\label{sec:darcy}
The task is to learn how to produce samples of a permeability field $K(x,y)$ and a pressure field $p(x,y)$ with $(x,y) \in \mathbb{R}^{2}$ that satisfy the following differential equation:
\begin{talign}
\label{eqn:darcy_residual}
    \mathcal{R}_{\text{Darcy}}(K, p) = \nabla \cdot \left( K \nabla p \right) + f = 0 ,
\end{talign}
where $f$ is the measurement of some fluid's flow through a porous medium, and corresponds to the divergence of the vector field defined by $K\nabla p$ pictured in \cref{tab:misspec-residuals}, or the \emph{net} amount of fluid entering and exiting a given point. Sources of misspecification when using this equation come from either assumptions about the material and potentially noisy measurements of the fluid's flow and applied pressure field. Details on Darcy Flow are given in \cref{app:DarcyFlowWriteup}.

This differential equation doubles as a residual function we use to verify our generated solutions. The differential terms are estimated through finite differences using the same stencils used by  \citet{bastek_physics-informed_2025}, meaning $K$ and $p$ are sampled as matrices in $\mathbb{R}^{n\times n}$. We use this to define our constraint adjustment $l_{c}=\exp(-\left| \mathcal{R}_{\text{Darcy}}(K, p) \right|)$, where $K$ and $p$ are the denoiser outputs. We use a diffusion model with a UNet backbone from \citet{karras_analyzing_2023}, and discretize $K$ and $p$ in $\mathbb{R}^{64 \times 64}$. The architecture, training details and runtimes are available in \cref{appdx:darcyTraining}. We highlight that the overhead in runtime between our method and a vanilla score matching implementation is low.

\begin{table*}[t]
  \centering\scriptsize
  \caption{ Left: Illustration of the Darcy Flow field. Right: Measured (\textbf{best/lowest}, \underline{second best}) mean absolute residuals and validation set NLL of the different methods across different levels of misspecification on Darcy. Residual means drawn across 1000 samples with two standard deviations, and NLL values (in bits/dim) on the complete validation set. Our method shows good performance across all misspecification levels. Vanilla does not use the constraint, so it is the same for all levels. }\vspace{-3pt}
  \label{tab:misspec-residuals}

\begin{minipage}[c]{.2\textwidth}
  \scriptsize
  \begin{tikzpicture}[inner sep=0]
  
    \node[anchor=south west,inner sep=0] (image) at (0,0)
      {\includegraphics[width=\linewidth,trim=5 5 5 5,clip]{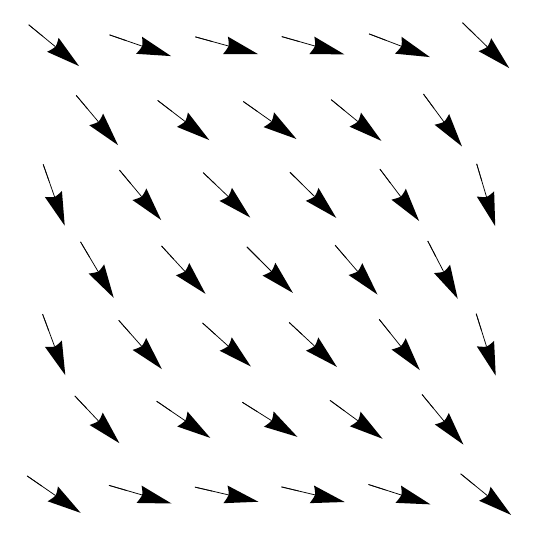}};

    \begin{scope}[x={(image.south east)},y={(image.north west)}]

      \draw[thick,densely dashed] (0,0) -- (1,0) -- (1,1) -- (0,1) -- cycle;

      \fill[draw,fill=red,rounded corners=1pt] (0,1) -- ++(.1,0) -- ++(0,-.1) -- ++(-.1,0) -- cycle;
      \fill[draw,fill=blue,rounded corners=1pt] (1,0) -- ++(-.1,0) -- ++(0,.1) -- ++(.1,0) -- cycle;
      
      \node[fill=white,fill opacity=.8,rounded corners=2pt,draw=black!50,inner sep=3pt] at (.5,.5) {$-K\nabla p$};
      \node[anchor=west,inner sep=1pt,outer sep=3pt,font=\bf,black,fill=white] at (.1,.95) {Source};
      \node[anchor=east,inner sep=1pt,outer sep=3pt,font=\bf,black,fill=white] at (0.9,.05) {Sink};      
      
    \end{scope}
  \end{tikzpicture} 
\end{minipage}
\hfill
\begin{minipage}[c]{.78\textwidth}
\raggedleft\scriptsize
\setlength{\tabcolsep}{1pt}
\begin{tabular}{llcccc}
\toprule
& & \tikz[baseline=0pt]\draw[anchor=base,black,<-] (0,0) --node[yshift=2pt,font=\scriptsize\em]{increasing misspecification} ++(2cm,0); & \em Original & \multicolumn{2}{l}{\tikz[baseline=0pt]\draw[anchor=base,black,->] (0,0) --node[yshift=2pt,font=\scriptsize\em]{increasing misspecification} ++(3.5cm,0);} \\
\bf Metric &
\bf Method & 
\bf $f = 7.5$ & 
\bf $f = 10$ &
\bf $f = 12.5$ & 
\bf $f = 15$ \\ 
\midrule
& Vanilla & $0.157 \pm 0.071$ & $0.157 \pm 0.071$ & $0.157 \pm 0.071$ & $0.157 \pm 0.071$ \\
& DPS & $\underline{ 0.141 \pm 0.063 }$ & $0.139 \pm 0.067$ & $0.140 \pm 0.065$ & $\underline{0.139 \pm 0.066}$ \\
Residual & PIDM & $0.184 \pm 0.008 $ & $\mathbf{0.025 \pm 0.010} $ & $\underline{0.129 \pm 0.008} $ & $0.205 \pm  0.009 $ \\
& SCD (ours) & $\mathbf{0.113 \pm 0.048}$ & $\underline{0.106 \pm 0.049}$ & $\mathbf{0.111 \pm 0.059}$ & $\mathbf{0.105 \pm 0.048}$ \\

\midrule
& Vanilla & $\mathbf{-15.5}$ & $\mathbf{-15.5}$ & $\mathbf{-15.5}$ & $\mathbf{-15.5}$ \\
& DPS & $ -14.5 $ & $ -14.5 $ & $ -14.5 $ & $ -14.3 $ \\
NLL & PIDM & $ -3.9 $ & $ -3.7 $ & $ -3.9 $ & $ -3.5 $ \\
& SCD (ours) & $ \underline{-15.1} $ & $ \underline{-15.2} $ & $ \underline{-15.1} $ & $ \underline{-15.1} $ \\
\bottomrule
\end{tabular}
\end{minipage}
\vspace{-3pt}
\end{table*}

\paragraph{Distributional fidelity, misspecification and residuals} Darcy Flow is a mathematical model particularly used to infer properties of a material in a real physical system. As such, using this model can be prone to different sources of error. In this section, we present experiments on a simple case where the measured flow is on a wrong scale (an example of measurement miscalibration).

To induce misspecification, we test different values for the measured flow $f$ in the constraint, while keeping the original data. The results using vanilla, DPS guidance, PIDM and our method are shown in \cref{tab:misspec-residuals} reported with residual values and NLL estimated through the probability flow ODE, as specified in \cref{app:NLLcalc}, with qualitative samples shown in \cref{fig:darcy_samplesresiduals}. The guidance scale used for DPS was tuned to $0.02$ based on the residual values with grid search. We highlight that PIDM is especially sensitive to the induced misspecifications, causing its residuals to go up considerably to the point that it performs worse than a baseline vanilla diffusion model. On the other hand, our model mostly preserves the NLL. We hypothesize that it can still use the information in the area where the $f$-field is correctly specified as zero, while learning to adapt the gradient information used for the source and the sink. Using a guidance method with vanilla diffusion slightly improves the residuals but not by a significant margin, and the same behavior continues across different misspecification levels.

\subsection{Helmholtz Equation}
\label{sec:helmholtz}

\begin{wrapfigure}{r}{.4\textwidth}
    \centering
    \includegraphics[width=\linewidth]{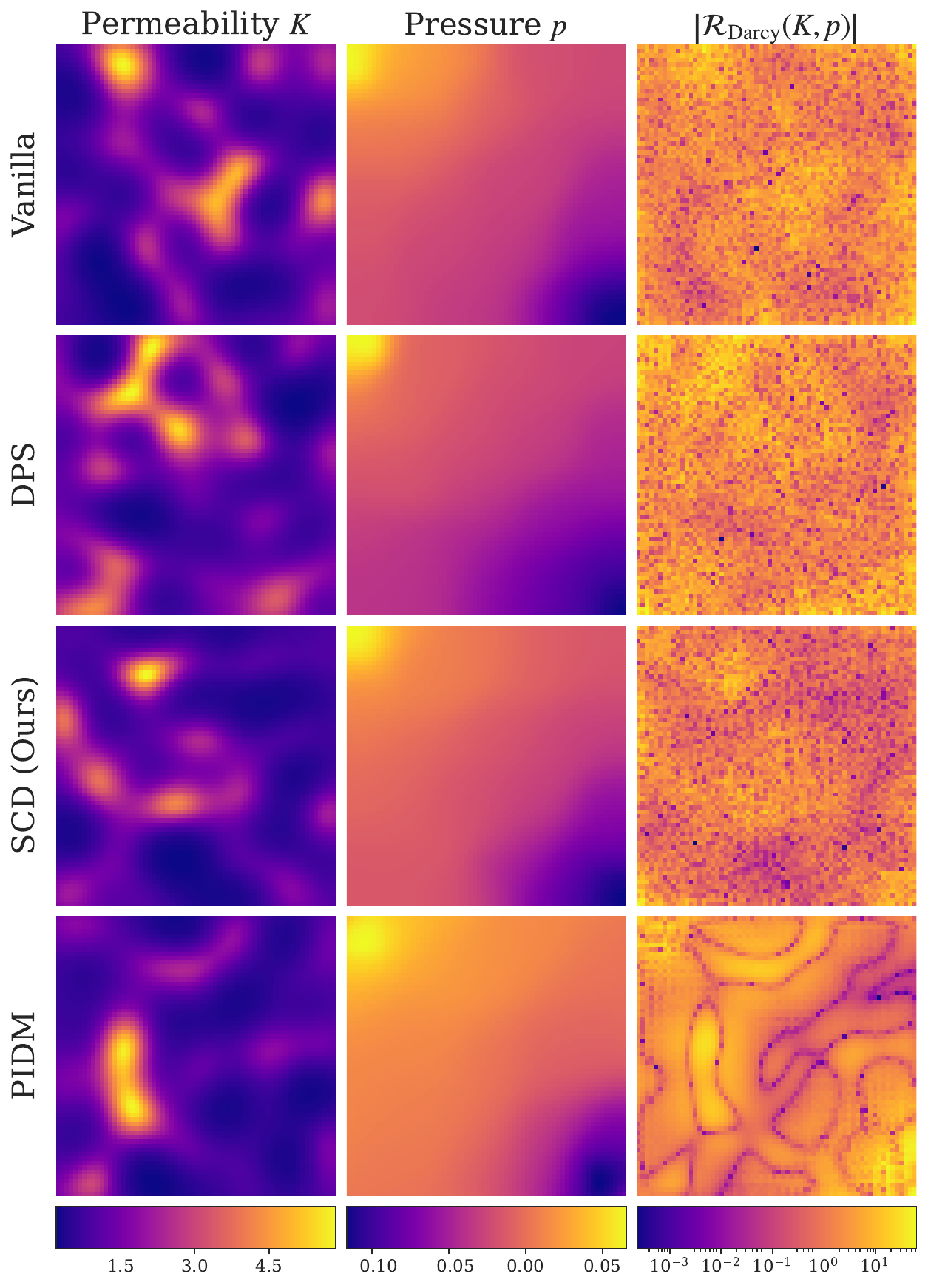}
    \caption{Residuals and samples produced by vanilla, DPS, SCD and PIDM. The average residuals with PIDM are lower, but regions with low residual magnitude are more localized. SCD has an overall more uniform lower residual.}
    \label{fig:darcy_samplesresiduals}
    \vspace{-0.5em} 
\end{wrapfigure}

The Helmholtz Differential Equation is used to model the propagation of waves through (possibly heterogeneous) media. Its 2-dimensional version is given by:
\begin{talign}
\label{eqn:HelmholtzEqn}
    \mathcal{R}_{\text{Helmholtz}}(u, a) = \nabla^{2}u(\bm{x}) + k^2u(\bm{x}) - a(\bm{x}) = 0, 
\end{talign}
where $\bm{x} \in (0,1)^2$ and $u$ is the solution field, $a$ the source field and $k$ the wave number. The solution is given by a spatial wave in 2 dimensions and boundary conditions equal to 0. This equation is particularly important in acoustics and electromagnetics, describing the way sound and light travel through space. It is generally subject to the measurement of $k$, which uniquely defines the solution of the system given the boundary conditions. The wave number is usually estimated with sensor readings in noisy media, which subjects the equation to observation error.

Similar to Darcy Flow, we use \cref{eqn:HelmholtzEqn} to define $l_{c} = \exp(-\left| \mathcal{R}_{\text{Helmholtz}}(u, a) \right|)$, where $u$ and $a$ correspond to the denoiser outputs. Following \citet{yao2025guided}, we use a discretization of $u$ and $a$ in $\mathbb{R}^{128\times 128}$ and a wave number $k=1$. The training details are available in \cref{appdx:helmholtzTraining}.

\begin{table*}[t]
  \centering\scriptsize
  \caption{Measured (\textbf{best/lowest}, \underline{second best}) mean absolute residuals and validation set NLL of the different methods across different levels of misspecification on Helmholtz. Residual means drawn across 1000 samples with two standard deviations, and NLL values (in bits/dim) on the complete validation set. Our method shows good performance across all misspecification levels. Vanilla does not use the constraint, so it is the same for all levels.} \vspace{-3pt}
  \label{tab:misspec-residuals-helmholtz}
  \setlength{\tabcolsep}{10pt}
\begin{tabular}{llcccc}
\toprule
& & \em Original & \multicolumn{3}{l}{\tikz[baseline=0pt]\draw[anchor=base,black,->] (0,0) --node[yshift=2pt,font=\scriptsize\em]{increasing misspecification} ++(7cm,0);} \\
\bf Metric &
\bf Method & 
\bf $\sigma_\mathrm{obs} = 0 $ & 
\bf $\sigma_\mathrm{obs} = 0.05$ &
\bf $\sigma_\mathrm{obs} = 0.1$ & 
\bf $\sigma_\mathrm{obs} = 0.5$ \\ 
\midrule
& Vanilla & $0.035 \pm 0.007$ & $0.035 \pm 0.007$ & $0.035 \pm 0.007$ & $0.035 \pm 0.007$ \\
Residual & FunDPS & $13.84 \pm 3.4$ & $ 13.86 \pm 3.4$ & $13.84 \pm 3.3$ & $14.2 \pm 3.4$ \\
& DPS & $ \underline{0.032 \pm 0.004} $ & $ \underline{0.033 \pm 0.005} $ & $ \underline{0.035 \pm 0.004} $ & $\underline{0.034 \pm  0.003} $ \\
& SCD (ours) & $\mathbf{0.025 \pm 0.003}$ & $ \mathbf{0.024 \pm 0.003} $ & $ \mathbf{0.024 \pm 0.002} $ & $\mathbf{0.023 \pm 0.003}$ \\

\midrule
& Vanilla & $\mathbf{-21.6}$ & $\mathbf{-21.6}$ & $\mathbf{-21.6}$ & $\mathbf{-21.6}$ \\
NLL & FunDPS & $-7.2$ & $-7.3$ & $-7.2$ & $-7.2$ \\
& DPS & $\mathbf{-21.6}$ & $\underline{-21.5}$ & $\underline{-21.5}$ & $\underline{-21.5}$ \\
& SCD (ours) & $\mathbf{-21.6}$ & $\underline{-21.5}$ & $\underline{-21.5}$ & $\underline{-21.5}$ \\
\bottomrule
\end{tabular}
\vspace{-3pt}
\end{table*}

\paragraph{Distributional fidelity, misspecification and residuals}
In this setting, we perturb the wave number adding Gaussian noise. The degree of misspecification is handled by the standard deviation of this noise, and this experiment allows us to measure how sensitive SCD can be to noisy gradients through the constraint function. The results are summarized in \cref{tab:misspec-residuals-helmholtz}, where we compare our results with the guidance framework FunDPS using a model trained by the authors \citep{yao2025guided}, a vanilla diffusion model and DPS guidance on the vanilla diffusion model. The FunDPS sampling is done with a combination of a reconstruction loss based on partial observations and the residual to observe the issues resulting from imbalanced guidance. SCD manages to preserve the NLL with the most competitive residual values in this task. It is evident that using this combined guidance in FunDPS heavily favors the reconstruction error, essentially forfeiting the physical consistency of the generated samples. On the other hand, regular DPS only sees marginal improvements in terms of the residuals compared to the vanilla diffusion model.
\section{Discussion and conclusion}
\label{section:discussion}
In this work, we introduced Softly Constrained Denoisers (SCD), a simple way to embed constraint knowledge directly into diffusion model denoisers. Unlike guidance-based methods or regularization-based methods that bias the training distribution, our approach provides a soft inductive bias that improves constraint satisfaction while retaining flexibility when constraints are misspecified. We demonstrated these benefits through illustrative examples (\cref{sec:toy}) and PDE benchmarks (\cref{sec:darcy,sec:helmholtz}), showing that SCD outperforms standard diffusion models and existing constraint-enforcing approaches when the constraint is partially misspecified. Our results suggest that SCD can serve as a drop-in upgrade for diffusion-based generative modeling in applications where constraints are desired.\looseness-1

\paragraph{Limitations and future work}
A limitation of our method is that our denoiser is based on an approximation to the score function for the tilted distribution $l_{c}(\bm{x}_{0})p(\bm{x}_{0})$, which requires a differentiable constraint $l_{c}(\bm{x}_{0})$. There are a multitude of applications that have such constraints available, but to further extend the usefulness of our approach, future work may explore how to derive differentiable constraints from non-differentiable ones. We also note that, while standard in ``diffusion for PDEs'' literature, we assume access to clean data. Finally, even though the inductive bias provided by the denoiser parameterization does not inhibit the diffusion model from following the data distribution, the parameterization may push the model towards particular biases in practice, for example CNNs with their inductive bias toward translation equivariance.

Future work could explore deriving SCDs based on more advanced guidance approximations and applying them in various scientific application areas. We also highlight that the trade-off between distributional fidelity and constraint compliance produces a Pareto front; our work explores a point where we desire to maximize distributional fidelity while using the constraint, and future work may focus on how to tune ``where'' in the front we land, e.g. we want stronger (but not strict) constraint compliance, at a small expense of data fidelity. Further robustness to misspecification could be achieved by introducing learnable parameters to $l_c(\bm x)$ itself. We also believe that a study on the effects of noise in the training data could be highly valuable for practitioners and the ``diffusion for PDEs'' field as a whole, as the clean data assumption may limit applicability in practice.

\paragraph{Impact statement} Diffusion models have garnered attention to generate samples from the solution space in Inverse Problems \citep{chung_diffusion_2023, song_pseudoinverse-guided_2023} and Differential Equations \citep{jacobsen_cocogen_2025, bastek_physics-informed_2025, huang2024diffusionpde}, where this work is framed to make the most impact. The main contribution is a denoiser architecture that seeks to balance data and prior constraint information to improve sample diversity and reduce sensitivity to constraint misspecification, which are paramount for scientific applications.

The authors foresee this work to be useful in physical science applications such as fluid dynamics and climate forecasting. However, given the nature of research around differential equations and dynamical systems, there may also be use for military applications, though the authors do not see a direct relation at the time of presenting the work.

\section*{Acknowledgements}

This work was supported by the Research Council of Finland (Flagship programme: Finnish Center for Artificial Intelligence FCAI and decision 359207), ELLIS Finland, EU Horizon 2020 (European Network of AI Excellence Centres ELISE, grant agreement 951847). VYS acknowledges funding from the Finland Fellowship awarded by Aalto University. SK acknowledges funding from the UKRI Turing AI World-Leading Researcher Fellowship (EP/W002973/1). AS and SR acknowledge funding from the Research Council of Finland (grants: 339730 and 362408). MS acknowledges funding from AI Hub in Generative Models by Engineering \& Physical Sciences Research Council (EPSRC) with Funding Body Ref: EP/Y028805/1.

The authors acknowledge the research environment provided by ELLIS Institute Finland, the CSC -- IT Center for Science, Finland, and the Aalto Science-IT project for computational resources.

\bibliographystyle{icml2026}

\bibliography{main}

\newpage
\clearpage

\appendix
\onecolumn

\section*{Appendices}

\section{Proofs} \label{app:proofs}

\PropOptimalDenoiserSCD*

\begin{proof}
    Take the denoising score matching loss, letting $D^{\text{orig}}_{\theta}(x_{t},t)=y$, and evaluate it with SCD:
    \begin{equation}
        J(y) = \mathbb{E}_{\vx_{0}|\vx_{t}} \left[ \left\lVert y + \gamma_{\theta}(\vx_{t},t)\sigma(t)^{2}\nabla_{y}\log l_{c}(y) - \vx_{0} \right\rVert^2 \right],
    \end{equation}
    where we will denote $\gamma_{\theta}(\vx_{t}, t)\sigma(t)^2 = \eta(t)$ and $\nabla_{y}\log l_c(y)=v(y)$ for brevity (we will unroll this once we get to the final expression). We can expand the norm as follows:
    \begin{equation}
    \label{eq:proofExpandNorm}
        \mathbb{E}_{\vx_{0}|\vx_{t}} \left[ \lVert y \rVert^2 + \lVert \eta(t)v(y)\rVert^2 + \lVert \vx_{0}\rVert^2+2\eta(t)v(y)^{\intercal}y-2\vx_0^{\intercal}y-2\eta(t)\vx_{0}^{\intercal}v(y) \right].
    \end{equation}
    We first take the expectation and then the gradient with respect to $\eta(t)$ and set it equal to zero to study its stationary points:
    \begin{talign}
        \nabla_{\eta(t)}J(y)&=2\left( \eta(t)v(y)^{\intercal}v(y)+v(y)^{\intercal}y-v(y)^{\intercal}\hat{\vx}^{*}_{0} \right)=0\\
        &\rightarrow v(y)^{\intercal}(y+\eta(t)v(y)-\hat{\vx}^{*}_{0})=0,
    \end{talign}
    where we define $\hat{\vx}^{*}\sb{0} = \mathbb{E}[\vx\sb{0}|\vx\sb{t}] $ as the ideal denoiser. A solution may come from $v(y)=0$, which is the case where the constraint is perfectly specified with an ideal denoiser, or a denoiser fit to a misspecified constraint. We are interested in the case given by the second factor. Unrolling definitions:
    \begin{equation}
    \label{eq:stationarityGamma}
        D^{\text{SCD}}_\theta(\bm x_t, t)\vcentcolon=y+\gamma_{\theta}(\vx_{t},t)\sigma(t)^{2}\nabla_{y}\log l_ c(y)=\hat{\vx}_{0}^{*},
    \end{equation}
    giving the stationarity conditions with respect to $\gamma_{\theta}$.
    
    Now we take \cref{eq:proofExpandNorm} and take the gradient with respect to $y$, yielding:
    \begin{equation}
        \begin{split}
            \nabla\sb y J(y) = 2(y&+\eta(t)^{2}(\nabla\sb{y}v(y))^{\intercal}v(y)+\eta(t)\left(\nabla\sb{y}v(y)^{\intercal}y+v(y)\right)\\
            &-\mathbb{E}[\vx\sb{0}|\vx\sb{t}]-\eta(t)\nabla\sb{y}v(y)^{\intercal} \mathbb{E}[\vx\sb{0}|\vx\sb{t}] ),
        \end{split}
    \end{equation}
    and note that $\nabla\sb{y}v(y) = H$ is a Hessian with respect to $y$ for $\log l\sb{c}(y)$. Grouping terms by $y$, $v(y)$ and $\hat{\vx}^{*}\sb{0}$, we get:
    \begin{talign}
        (I+\eta(t)H)y&+(I+\eta(t)H)\eta(t)v(y)-(I+\eta(t)H)\hat{\vx}^{*}\sb{0}\\
        &=(I+\eta(t)H)\left(y+\eta(t)v(y)-\hat{\vx}^{*}\sb{0}\right)=0.
    \end{talign}
    Unless $(I+\eta(t)H)$ is singular, the stationarity solution can only be given by the second factor. This gives:
    \begin{talign}
        y+\eta(t)v(y)-\hat{\vx}^{*}\sb{0}&=y+\gamma\sb{\theta}(\vx\sb{t},t)\sigma(t)^{2}\nabla\sb{y}\log l\sb{c}(y)-\hat{\vx}^{*}\sb{0}=0\\
        \rightarrow D^{\text{SCD}}_\theta(\bm \vx_t, t) &\vcentcolon=y+\gamma\sb{\theta}(\vx\sb{t},t)\sigma(t)^{2}\nabla\sb{y}\log l\sb{c}(y)=\hat{\vx}^{*}\sb{0},
    \end{talign}
    concluding the proof.
\end{proof}

\CorolGammaRole*

\begin{proof}
    Take the stationarity condition for $\gamma_{\theta}$ given by \cref{prop:scd_optimal_denoiser}, with $D^{\text{orig}}_{\theta}(\vx_{t},t)=y$ and the (generally unknown) ideal denoiser $\hat{\vx}^{*}\sb{0}$:
    \begin{equation}
        v(y)^{\intercal}(y+\eta(t)v(y)-\hat{\vx}^{*}_{0})=0
    \end{equation}
    Distributing the multiplication and rearranging terms, we get:
    \begin{talign}
        \eta(t)v(y)^{\intercal}v(y) &= v(y)^{\intercal}(\hat{\vx}^{*}_{0}-y)\\
        \rightarrow \eta(t)\lVert v(y)\rVert^{2} &= \langle v(y), \hat{\vx}^{*}\sb{0}-y\rangle\\
        \rightarrow \eta(t) &= \frac{\langle v(y), \hat{\vx}^{*}\sb{0}-y\rangle}{\lVert v(y)\rVert^{2}},
    \end{talign}
    where we use the definition of inner product from the first to the second line. Multiplying both sides by $v(y)$, and recalling the definitions $\gamma_{\theta}(\vx_{t}, t)\sigma(t)^2$ and $v(y)=\nabla_{y}\log l_c(y)$ yields the result:
    \begin{talign}
        \gamma\sb{\theta}(\vx\sb{t},t)\sigma(t)^{2}\nabla\sb{y}\log l\sb{c}(y) = \frac{\langle \nabla\sb{y}\log l\sb{c}(y), \hat{\vx}^{*}\sb{0}-y\rangle}{\lVert \nabla\sb{y}\log l\sb{c}(y) \rVert^{2}}\nabla\sb{y}\log l\sb{c}(y) = \text{proj}_{\nabla\sb{y}\log l\sb{c}(y)}(\hat{\vx}^{*}\sb{0}-y).
    \end{talign}
    In particular, $\sigma(t)^{2}$ acts as a preconditioner for the norm of $(\hat{\vx}^{*}_{0}-y)$, and thus $\gamma_{\theta}$ encodes the proportion of the denoiser error $\hat{\vx}^{*}\sb{0}-y$ that can be explained by the constraint gradient $\nabla\sb{y}\log l\sb{c}(y)$. It then follows that its value goes to zero in two cases: $y=\hat{\vx}^{*}\sb{0}$, meaning that the denoiser is already ideal, or when the denoiser error is perpendicular to the constraint gradient, meaning that the constraint does not explain any part of the error. In a sense, this parameter serves a similar role in the optimization to the learned uncertainty weight in the EDM-2 loss \citep{karras_analyzing_2023}.
\end{proof}

\PropShiftedOptimum*

\begin{proof}
    Let $y = D_\theta(x_t, t)$. The objective can be rewritten as the expected risk for a given input $x_t$:
    \begin{equation}
        J(y) = \mathbb{E}_{x_0|x_t} \left[ \| y - x_0 \|^2 \right] + \lambda \| \mathcal{R}(y) \|^2.
    \end{equation}
    To find the optimum $y^*$, we take the gradient with respect to $y$ and set it to zero:
    \begin{align}
        \nabla_y J(y) &= \nabla_y \left( \|y\|^2 - 2y^\top \mathbb{E}[x_0|x_t] + \mathbb{E}[\|x_0\|^2] \right) + \lambda \nabla_y \| \mathcal{R}(y) \|^2 \\
        0 &= 2(y - \mathbb{E}[x_0|x_t]) + 2\lambda (\nabla_y \mathcal{R}(y))^\top \mathcal{R}(y).
    \end{align}
    Solving for $y$ yields the result. Consequently, $D^*_{\text{reg}}(x_t, t) \neq \mathbb{E}[x_0|x_t]$ unless $\lambda=0$ or the constraint gradient is zero.
\end{proof}

\PropIncreasedELBO*

\begin{proof}

    The standard diffusion loss $\mathcal{L}_{\text{diff}} = \mathbb{E}[\| D_\theta(x_t, t) - x_0 \|^2]$ corresponds to the variational bound on the negative log-likelihood if using a specific weighting $w(t)$\citep{kingma2021variational}. It is a well-known result that the unique global minimizer of this quadratic loss is the conditional expectation $D^*_{\text{vanilla}}(x_t, t) = \mathbb{E}[x_0 | x_t]$.
    
    Since $D^*_{\text{reg}}(x_t, t) \neq \mathbb{E}[x_0 | x_t]$ (from Proposition \ref{prop:shifted_optimum}), and $\mathcal{L}_{\text{diff}}$ is strictly convex with respect to the prediction $y$, it follows that:
    \begin{equation}
    \mathbb{E}_{x_0|x_t} \left[ \| D^*_{\text{reg}}(x_t) - x_0 \|^2 \right] > \mathbb{E}_{x_0|x_t} \left[ \| D^*_{\text{vanilla}}(x_t) - x_0 \|^2 \right].
    \end{equation}
    Specifically, by the bias-variance decomposition, the increase in the diffusion loss is exactly the squared magnitude of the shift derived in Proposition \ref{prop:shifted_optimum}:
    \begin{equation}
        \Delta \mathcal{L} = \| D^*_{\text{reg}}(x_t, t) - \mathbb{E}[x_0|x_t] \|^2.
    \end{equation}
\end{proof}

\section{Extended related work}
\label{app:extRelatedWork}

\paragraph{Diffusion models applied to PDEs } 
\citet{jacobsen_cocogen_2025} propose conditional PDE generation using a Controlnet-like conditioning structure \citep{zhang2023adding} and an inference-time adjustment where the final samples are optimized to have a small PDE residual. 
\citet{bastek_physics-informed_2025} present Physics-Informed Diffusion Models (PIDM), a framework to train DDPM-based diffusion models with a PDE residual as a regularizer term to minimize along the loss function. 
Several works utilize DPS-like guidance \citep{chung_diffusion_2023} for PDE data assimilation, targeting noisy measurements \citep{shysheya2024conditional}, constraint satisfaction \citep{huang2024diffusionpde}, or infinite-dimensional Banach spaces \citep{yao2025guided}. Similarly, \citet{chenggradient} employ projection-based sampling \citep{zhu_plugnplayrestore_2023}. Unlike these, our method avoids approximate inference-time guidance. Furthermore, our parameterization is orthogonal to recent architecture-focused works on neural operators \citep{huwavelet, oommen2024integrating} or GNNs \citep{valencia2025learning}, as it remains compatible with any base architecture.

\paragraph{Injecting measurement structure for training inverse problem solvers} Mathematically, the closest work is the likelihood-informed Doob's h-transform by \citet{denker2024deft}, who finetune adapters using observation gradients $\nabla_{\bm x_0} p(\bm y\mid \bm x_0)$, similar to our constraint-informed parameterization using $\nabla_{\bm x_0} l_c(\bm x_0)$. However, our motivation and methodology differ: they use likelihoods $p(\bm y\mid \bm x_0)$ for Bayesian inference from noisy observations, whereas we \emph{define} $l_c(\bm x_0)$ to restrict generation to a constrained subset. Their goal is an alternative to inference-time adjustment, while we seek to alleviate distributional biases and constraint misspecification. Furthermore, we train from scratch, whereas they finetune adapters on larger models. Finally, \citet{liu20232, chung2023direct} propose embedding structure via bridge processes that interpolate between condition and target; this is inapplicable to our setting, as we do not target a translation problem.

\paragraph{Inference time adjustment for inverse problems} Many methods target adjusting diffusion models at inference time to solve inverse problems, many of them formally targeting approximation of \cref{eq:guidance_adjustment}. \citet{song_score-based_2021} was one of earliest papers to propose inference-time adjustments to the diffusion model to solve problems like inpainting and color restoration. \citet{chung2022improving, wangzero, zhu_plugnplayrestore_2023} propose more advanced methods for a wider range of inverse problems. The first methods to make the explicit connection to \cref{eq:guidance_adjustment} were \citet{ho_video_2022, chung_diffusion_2023, song_pseudoinverse-guided_2023}. While the method by \citet{song_pseudoinverse-guided_2023} only worked for linear inverse problems, \citet{song_loss-guided_2023} generalized it to general guidance functions through Monte Carlo integration of \cref{eq:guidance_adjustment}. Works focused on improving the $p(\bm x_0\mid \bm x_t)$ approximation include \citep{boys_tweedie_2024, peng_improving_2024, rissanen_free_2025}. 

\paragraph{Hard-constraint diffusion via modified dynamics} Several works impose constraints by \emph{changing the diffusion dynamics} so that the support is restricted by construction. Riemannian diffusion models and flow matching models move the noising and denoising processes to a target manifold, enabling sampling on spheres, tori, hyperboloids, and matrix groups but requiring smooth geometry and geometric operators \citep{de2022riemannian,huang2022riemannian,chenflow}. \citet{fishmandiffusion, fishman2023metropolis, lou_reflected_2023} propose to use noising processes that are constrained within convex sets defined by inequality constraints, while \citet{liu_mirror_2023} tackle the problem by learning standard diffusion models in a dual space created using a mirror map. \emph{Star-shaped DDPMs} tailor noise to exponential-family distributions suited to constrained manifolds \citep{okhotin2023star}. These methods provide hard constraints but are typically specialized to particular geometries or constraint classes. Other works have explored modifying the sampling dynamics in other non-geometric constrained settings \citep{Liu2023LearningDB,Christopher2024ConstrainedSW,saeid2025ManuallyBridged}. These prior works still introduce the constraint information at sampling time, meaning the constraint information is not given at training time, and focus on hard constraint guarantees. Thus, they still are prone to the misspecification issues that we tackle. Contemporary work by \citet{liang2026improvedconstrainedgenerationbridging} approaches the sampling dynamics with a similar architecture structure as ours, but still focuses on the inference-time adaptation setting. Other contemporary work by \citet{christopher2026constraintawareflowmatchingdecision} explores a similar approach to ours in the flow matching framework, where they introduce fixed point iteration differentiable layers to the denoiser network, which are derived from the constraint through Sequential Quadratic Programming. Their setting is different from ours however, as they do not address the problem of misspecification. \looseness-2

\paragraph{Optimizing samples to match with constraints} \citet{benhamu_dflow} generate samples with constraints by optimizing a source point in the noisy latent space such that the generative ODE solution matches with the constraint. \citet{tang2024tuning} instead optimize the noise injected during the stochastic sampling process. \citet{pooledreamfusion} generate samples by directly optimizing the target image within a constrained space (e.g., images parameterized by a neural radiance field \citep{mildenhall2021nerf}), while minimizing the diffusion loss for the image. \citet{wang2023prolificdreamer} extend the method with a particle-based variational framework. In a similar manner, \citet{mardanivariational} formulate the sampling process as optimizing a variational inference distribution on the clean samples. \looseness-2

\paragraph{Distributional constraints} \citet{khalafi2024constrained} formulate a distributional constrained generation task where the generative distribution should have a KL divergence below a threshold to a set of auxiliary distributions $q^i$. \citet{khalaficomposition} extend the idea to compositional generation.

\paragraph{Soft inductive biases}
\citet{finzi_soft_2021} propose using ``dual path'' layers to build a neural network, where one path uses a hard-constrained layer, e.g.\ a rotationally equivariant layer, and the other uses a more ``relaxed'' layer. By assigning a lower prior probability to the relaxed path, a soft inductive bias towards solutions that satisfy the constraint is imposed, without restricting the possible hypothesis space for the neural network.

\section{Negative log-likelihood estimation}
\label{app:NLLcalc}

For a diffusion model with denoiser $D_{\vtheta} (\vx_{t}, t)$, the log-likelihood can be estimated from the probability flow differential equation and using an instantaneous change of variables \cite{song_likelihood_2021,kong_infotheo_2023}, from which we get the following formula:
\begin{equation}
\label{eqn:loglikelihood}
    \log p(\vx_{0}) = \log p(x_{T}) + \int_{0}^{T}\nabla_{\vx_{t}}\cdot g(\vx_{t}, t) \text{d}t,
\end{equation}
where $g(t) = \dot{\sigma}(t)\sigma(t)\nabla_{\vx_{t}} \log p(\vx_{t})$ is the drift of the probability flow ODE. We follow \citet{karras_elucidating_2022} and set $\sigma(t)=t$, meaning that the drift is defined by the score function, which is approximated as $\nabla_{\vx_{t}} \log p(\vx_{t}) \approx \frac{D_{\vtheta} (\vx_{t}, t) - \vx_{t}}{\sigma_{t}^{2}}$ for score matching diffusion models \citep{song_score-based_2021, karras_elucidating_2022}, and $T=\sigma_{\text{max}}^{2}$ the maximum noise level of the diffusion process. We take $p(\vx_{T})$ to be approximately $\mathcal{N}(0, \sigma_{\text{max}}^{2}\mathbf{I})$, i.e. the initial noise distribution, giving an analytic expression for $\log p(\vx_{T})$ as the entropy of a multivariate normal distribution. Then, for the divergence $\nabla_{\vx_{t}}\cdot g(t)$ we have:
\begin{align}
    \nabla_{\vx_{t}} \cdot \left( \nabla_{\vx_{t}} \log p(\vx_{t}) \right) &\approx \nabla_{\vx_{t}} \cdot \left( \frac{D_{\vtheta} (\vx_{t}, t) - \vx_{t}}{\sigma_{t}^{2}} \right)\\
    &= \frac{-d + \nabla_{\vx_{t}}\cdot D_{\vtheta} (\vx_{t}, t)}{\sigma_{t}^{2}},
\end{align}
where $d$ is the dimensionality of the data. Notice that $\nabla_{\vx_{t}}\cdot D_{\vtheta} (\vx_{t}, t)$ is the trace of the Jacobian of the denoiser. Computing the Jacobian exactly is expensive, requiring $O(d)$ network evaluations. To reduce the computational cost, we use Hutchinson's trace estimator:
\begin{align}
    \nabla_{\vx_{t}} \cdot D_{\vtheta}(\vx_{t}, t) &= \text{Tr}(\mathbf{J}_{D_{\vtheta}})\\
    &= \mathbb{E}_{\epsilon} \left[ \epsilon^{\intercal}\mathbf{J}_{D_{\vtheta}}\epsilon \right]\\
    &= \mathbb{E}_{\epsilon} \left[ \epsilon^{\intercal} \frac{\partial  D_{\vtheta} (\vx_{t}, t)}{\partial \vx_{t}} \epsilon \right],
\end{align}
where $\epsilon$ is a random vector with distribution such that $\mathbb{E}[\epsilon^{\intercal} \epsilon] = \mathbf{I}$, usually chosen to be normal or Rademacher. This estimator only requires $O(n_{\text{samples}})$ evaluations of the network forward pass using autodifferentiation, with $n_{\text{samples}}$ the number of $\epsilon$ samples drawn.

In practice, we then get an estimate of the NLL by numerically estimating the integral in \cref{eqn:loglikelihood}, starting from a clean sample $\vx_{0} \sim p_{\text{data}}$ drawn from a hold-out validation set, from the smallest noise level in the noise schedule $\sigma_{\text{min}}^{2}$ to the biggest $\sigma_{\text{max}}^{2}$ and evaluating $-\log p({\vx_{0}})$.

For PIDM (which uses an $\vx_{0}$ prediction variance-preserving DDPM diffusion model), we can still use \cref{eqn:loglikelihood}, but we must take some adjustments into account. Denoting the output of the DDPM denoiser by $\hat{\vx}_{0}$

\begin{itemize}
    \item The input clean sample $\vx_{0}$ to the DDPM denoiser at each integration step must be rescaled by $\frac{1}{\sqrt{ \sigma_{t}^{2}+1} }$: we denote this input on the forward noising process $\vx_{t}$. For this denoiser, the noise level at each $t$ is given by $\sigma_{t} = \frac{\sqrt{1-\alpha_{t}}}{\sqrt{\alpha_{t}}}$, with $\alpha_{t}$ the noise schedule for VP-DDPM.
    \item Then, the drift term at each step will be given by $g(t) = \frac{\vx_{t} - \sqrt{\alpha_{t}}\hat{\vx_{0}} } {\sqrt{1-\alpha_{t}}}$. We can use this drift in \cref{eqn:loglikelihood} and proceed with numerical integration as before.
\end{itemize}

\section{Darcy Flow}
\label{app:DarcyFlowWriteup}

\emph{Darcy Flow} refers to a set of equations used to study the (laminar) flow of fluids in porous media. In two dimensions it is defined by the following set of equations for $\bm{x}=(x,y)$ \cite{schlichting_onset_2017, jacobsen_cocogen_2025, bastek_physics-informed_2025}:
\begin{align}
    \bm{u}(\bm{x}) =& -K(\bm{x})\nabla p(\bm{x})\\
    \nabla \cdot \bm{u} =& f(\bm{x})\\
    \bm{u} \cdot \hat{\bm{n}} =& 0, \text{ boundary condition}\\
    \int p(\bm{x}) \odif{\bm{x}} =& 0,
\end{align}
where $K$ is a permeability field that describes how easy a fluid flows through the medium, $p$ is the pressure field that defines where the fluid is pushed and pulled, $\bm{u}$ the velocity field of the fluid (as visualized in \cref{tab:misspec-residuals}) and $f$ is the net flow of fluid through a given point. \emph{Net} flow means that if there is the same amount fluid entering and exiting at a given point, then the net flow is zero. As a more concrete example, we can use Darcy flow to describe how water will flow through a body of sand. We can expect more water to flow at the areas where we apply more pressure to squeeze out the water.

There are many important assumptions around Darcy flow before it is applicable. We must have laminar flow at steady state, which means that there must not be sudden changes in the flow of the fluid. This depends on the geometry of the body that the fluid is traversing through, its viscosity, its speed, among other factors that are generally described through the Reynolds number \citet{schlichting_onset_2017, potter_fluidmech_2025}. The Reynolds number needed to break into critical flow also will change between different configurations, so it may be difficult to know a priori if Darcy flow is an appropriate way to model the system.

Through experience it has been observed that the Darcy flow approximation can work well for media that are ``fine-grained'' enough, as the gaps between the particles in the porous matrix do not break the flow of water \citet{woessner2020hydrogeologic_ch4}. However, at a big enough particle size, water starts to bounce more between collisions, causing the flow to become more turbulent.

\subsection{Qualitative results from Darcy Flow} \label{app:QualsDarcy}

\cref{fig:biasDist} shows a histogram of the learned distribution of values for pressure and permeability using each of the compared methods. Particularly, as noted by \citep{bastek_physics-informed_2025}, PIDM presents excessive bias compared to the other methods.

\begin{figure}[ht!]
    \centering
    \vspace{-1em}
    \includegraphics[width=\linewidth]{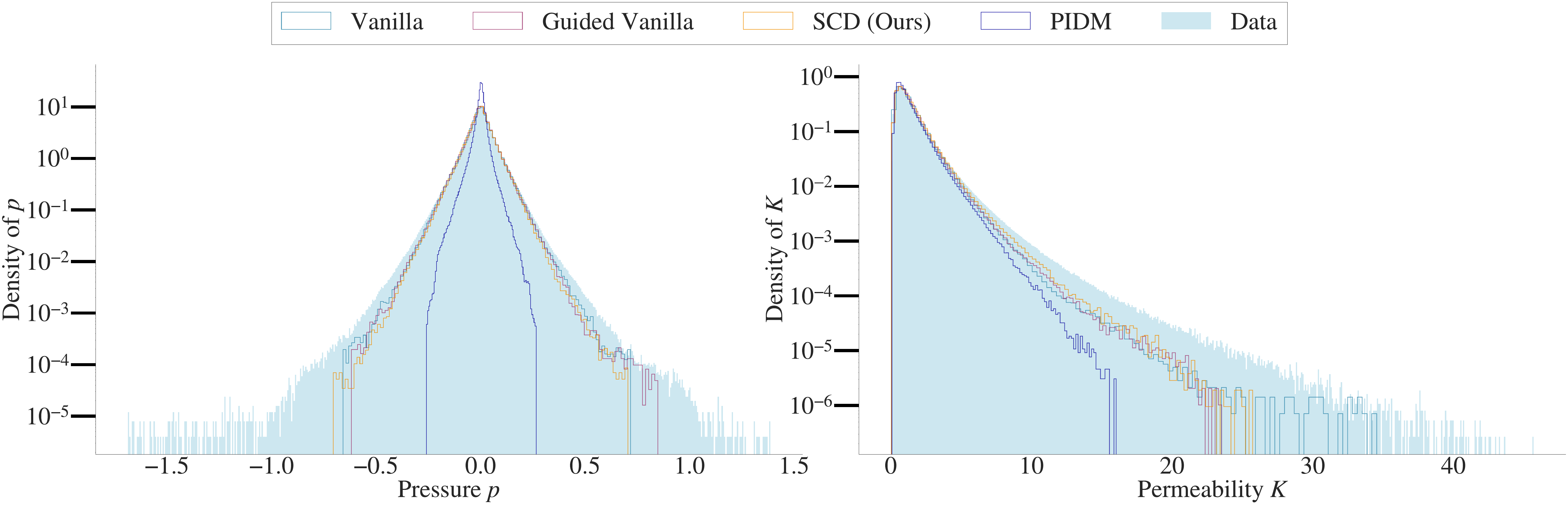}
    \caption{Darcy flow validation. 1000 samples were generated with each approach, using 100 denoising steps for each. Error bars indicate the minimum and maximum from the samples. The distribution PIDM learns is highly biased; all other methods have higher distributional fidelity. Particularly, our method shows low distributional bias compared to vanilla. 
    }
    \label{fig:biasDist}
    \vspace{-6pt}
\end{figure}

\section{Architectures and training details}
\label{app:training}

The architectures and experiments on this work were implemented using PyTorch 2.4.1. All experiments were ran on single GPUs, from either NVIDIA H200, H100, A100 or V100 GPUs. For both experiments, $\gamma_{\theta}$ is implemented as a 2-layer MLP with an embedding dimension of 100. The input for either task is the denoiser output $D_{\theta}(\bm{x_{t}},t)$ and the diffusion time $t$. Given that these differential equations generally have high condition numbers, meaning that the constraint gradient can easily blow up at high noise levels, we also also may pre-condition $\gamma_{\theta}$: we tried both by explicitly dividing by the square norm of the gradient by the result in \cref{prop:gamma_role}, and separately with an exponential scaler $\exp(-\lambda t)$ following the theory by \citet{gao2026c2fgcontrolclassifierfreeguidance}; we saw similar performance with both approaches, but better training stability with the exponential scaler, so we use the latter for our experiments. As a simple heuristic to tune $\lambda$ we take the mean residual value of the output of a denoiser at initialization, which is roughly white Gaussian noise with standard deviation $\sigma_{\text{data}}$, and then tune $\lambda$ so that at $t\approx \sigma_{\text{data}}$ the constraint gradient term has a norm of approximately $\sigma_{\text{data}}$, following the practices by \citet{karras_analyzing_2023} to keep input-output magnitudes in similar ranges.

\subsection{Modified loss function}
\label{app:modifiedLoss}

The loss function by \citet{karras_elucidating_2022} has the form seen in \cref{eq:dsm} with the choice of distributions $t \sim\ \text{LogNormal}(\mu_{\text{train}}, \sigma_{\text{train}}^2 ) $ and $\bm{x}_{t} \sim\ \mathcal{N}(\bm{x}_{0}, t\mathbf{I}) $, and weighting function $ w(t) = (t + \sigma_{\text{data}^2})/ (t\sigma_{\text{data}}^{2} )$ . Crucially, the choice for the distribution for $t$ is free, i.e. a design choice; \citet{karras_elucidating_2022} choose the log-normal distribution based on the observation that most of the important denoising steps in a diffusion model happen in the ``intermediate'' noise levels, since at high noise levels there is little distinction between steps and at very-low noise levels the differences are negligible. However, on the ODE problems presented in this paper we have observed that low noise levels have a substantial effect on the final residuals. Simply trying to set $\mu_{\text{train}}$ to a small value can cause numerical stability issues, because it may start sampling values that are too small and consequently make the weighting function blow up.

Following this, and based on the observation that in practice most numerical integration samplers always end at a predetermined minimum time step, we choose the noise level distribution $t \sim \text{TruncLogNormal}(\mu_{\text{train}}, \sigma^2_{\text{train}}, a)$ where $a$ defines the lowest possible noise level to be sampled. With this change, we noticed a substantial improvement in the residuals in the Darcy Flow experiment, with the results shown in \cref{tab:modifiedLossResiduals}. We use a mean of -1.5 and standard deviation of 1.2 for the log-normal loss and a mean of -2, standard deviation of 1.7 and truncation lower limit of -4 for the truncated log-normal loss.

\begin{table}[ht!]
  \centering\footnotesize
  \caption{Residuals obtained in the Darcy flow experiments using the noise level distribution $t \sim\ \text{LogNormal}(\mu_{\text{train}}, \sigma_{\text{train}}^2 ) $ by \citet{karras_elucidating_2022} and our choice $t \sim \text{TruncLogNormal}(\mu_{\text{train}}, \sigma^2_{\text{train}}, a)$. Using the truncated log-normal shows a substantial improvement over the log-normal in this task.}\vspace{-2pt}
  \label{tab:modifiedLossResiduals}
  \setlength{\tabcolsep}{7pt}
\begin{tabular}{llcccc}
\toprule
& & \em Original & \multicolumn{3}{l}{\tikz[baseline=0pt]\draw[anchor=base,black,->] (0,0) --node[yshift=2pt,font=\scriptsize\em]{increasing misspecification} ++(7cm,0);} \\
\bf Distribution &
\bf Method & 
\bf $f_\mathrm{max} = 10$ & 
\bf $f_\mathrm{max} = 20$ &
\bf $f_\mathrm{max} = 30$ & 
\bf $f_\mathrm{max} = 40$ \\ 
\midrule
& Vanilla & $0.874 \pm 0.411$ & $0.874 \pm 0.411$ & $0.874 \pm 0.411$ & $\underline{0.874 \pm 0.411}$ \\
log-normal & Guided Vanilla & $0.869 \pm 0.382$ & $0.862 \pm 0.372$ & $0.872 \pm 0.404$ & $0.869 \pm 0.379$ \\
& SCD (ours) & $0.342 \pm 0.136$ & $0.414 \pm 0.174$ & $0.429 \pm 0.173$ & $0.428 \pm 0.174$ \\
\midrule
& Vanilla & $0.157 \pm 0.071$ & $0.157 \pm 0.071$ & $0.157 \pm 0.071$ & $0.157 \pm 0.071$ \\
Truncated & Guided Vanilla & $0.139 \pm 0.067$ & $0.141 \pm 0.061$ & $0.140 \pm 0.065$ & $0.139 \pm 0.066$ \\
log-normal & SCD (ours) & $0.106 \pm 0.049$ & $0.113 \pm 0.058$ & $0.114 \pm 0.059$ & $0.118 \pm 0.051$ \\
\bottomrule
\end{tabular}
\end{table}

Following these results, we use the truncated log-normal distribution for all our experiments.

\newpage
\subsection{Circles}
\label{appdx:circlesTraining}

For the toy example we use a 3 layer MLP with an embedding dimension of 128. To train the networks we use the Adam optimizer with $\beta_{1}=0.9$, $\beta_{2}=0.999$ and a fixed learning rate of $\alpha=1\cdot 10^{-4}$. 

The data set consisted of 10000 points sampled from the unit circle, and the models were trained over 1000 epochs with a batch size of 128.

For the ``Dent'' variant of misspecification, we use the following parametric curve:

\begin{align}
    C(\theta) =& (r(\theta)\cos(\theta), r(\theta) \sin(\theta) ) \\
    r(\theta) =& 1 - 0.25\cdot \beta\left( \frac{\text{wrap} (\theta - \frac{\pi}{2}) }{1.2}, 5 \right) \cdot \left( 1 + 0.6\left( 1 - 2\left( \frac{\text{wrap}(\theta - \frac{\pi}{2} )}{1.2} \right)^{2} \right) \right)\\
    \text{wrap}(\theta) &= \left( \left( \theta + \pi \right)\mod 2\pi \right) - \pi\\
    \beta\left( u, 5 \right) &= \begin{cases}
        (1-u^{2})^{5}: |u| < 1\\
        0 \text{ otherwise},
    \end{cases}
\end{align}

where C defines the coordinates of every point in the curve.

\subsection{Darcy Flow}
\label{appdx:darcyTraining}

For Darcy Flow we used the UNet implementation by \citet{karras_analyzing_2023}. We use the Heun sampler implementation by \citet{karras_elucidating_2022} with 100 denoising steps. The used hyperparameters on the network are summarized in \cref{tab:hyperparams}.

\begin{table}[ht]
    \centering
    \caption{Architecture hyperparameters for the Darcy Flow experiments}
    \begin{tabular}{c|c}
    \toprule
        Hyperparameter & Value \\ \midrule
        Model channels & 24 \\
        Number of residual blocks & 8 \\
        Per-resolution multipliers & [1, 2, 3, 4] \\
        Attention resolutions & [16, 8] \\ \bottomrule
    \end{tabular}
    \label{tab:hyperparams}
\end{table}

\subsection{Helmholtz Equation}
\label{appdx:helmholtzTraining}

For the Helmholtz Equation we used the UNet implementation by \citet{karras_analyzing_2023}. We use the Heun sampler implementation by \citet{karras_elucidating_2022} with 100 denoising steps. The used hyperparameters on the network are summarized in \cref{tab:helmholtzhyperparams}.

\begin{table}[ht]
    \centering
    \caption{Architecture hyperparameters for the Helmholtz Equation experiments}
    \begin{tabular}{c|c}
    \toprule
        Hyperparameter & Value \\ \midrule
        Model channels & 32 \\
        Number of residual blocks & 8 \\
        Per-resolution multipliers & [1, 2, 3, 4] \\
        Attention resolutions & [32, 16] \\ \bottomrule
    \end{tabular}
    \label{tab:helmholtzhyperparams}
\end{table}

\subsection{Runtimes}

The details on the runtimes on an NVIDIA H200 GPU with vanilla diffusion and our method are shown in \cref{tab:runtimes}. We note that our method sees the most impact at sampling time, since the overhead duplicates per each sampling iteration because the Heun sampler makes two neural function evaluations per iteration.

\begin{table}[ht]
  \centering\footnotesize
  \caption{ Runtimes of vanilla and our method on training and sampling on an NVIDIA H200 GPU. Sampling is done for eight samples at a time using the Heun sampler implementation by \citet{karras_elucidating_2022}. Numbers are reported as a mean with two standard errors over 300 iterations. Units of time are specified for each column. }
  \label{tab:runtimes}
  \setlength{\tabcolsep}{7pt}
\begin{tabular}{lcc}
\toprule
\multicolumn{1}{l}{\bf Method}  & \multicolumn{1}{c}{\bf Training iteration wall clock time (ms)}  & \multicolumn{1}{c}{\bf 100-step sampling wall clock time (s)} 
\\ \midrule
Vanilla & $ 110 \pm 3.62 $ & $ 7.79 \pm 0.021 $ \\
SCD (ours) & $ 148 \pm 2.25 $ & $ 10.3 \pm 0.029 $ \\
\bottomrule
\end{tabular}
\end{table}

The baseline PIDM results were reproduced with a pre-trained model provided by \citet{bastek_physics-informed_2025}, and for the misspecification experiments we trained a diffusion model with PIDM with each misspecified residual using the same hyperparameter setups as \citet{bastek_physics-informed_2025}. The vanilla score-matching and SCD networks were trained using Adam with an initial learning rate of $2\cdot 10^{-3}$ and weight decay as detailed by \citet{karras_analyzing_2023}. Both the vanilla model and SCD were trained for 300k iterations, although SCD plateaus at around 250k. We use a batch size of 64. For all experiments we use Exponentially Moving Averages for sample generation with a decay rate of 0.99.

\section{Additional ``chop'' samples}

\begin{figure}[ht]
\begin{subfigure}[b]{.32\textwidth}
  \centering
  \includegraphics[width=.85\linewidth,trim=0 15 0 15,clip]{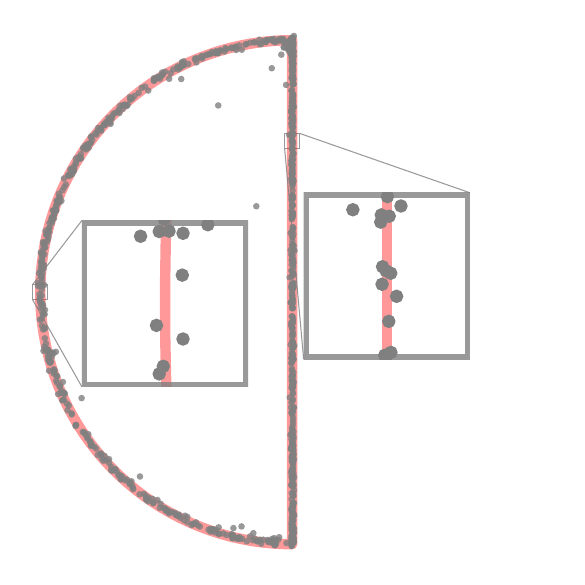}
  \caption{Vanilla diffusion}
\end{subfigure}
\hfill
\begin{subfigure}[b]{.32\textwidth}
  \centering
  \includegraphics[width=.85\linewidth,trim=0 15 0 15,clip]{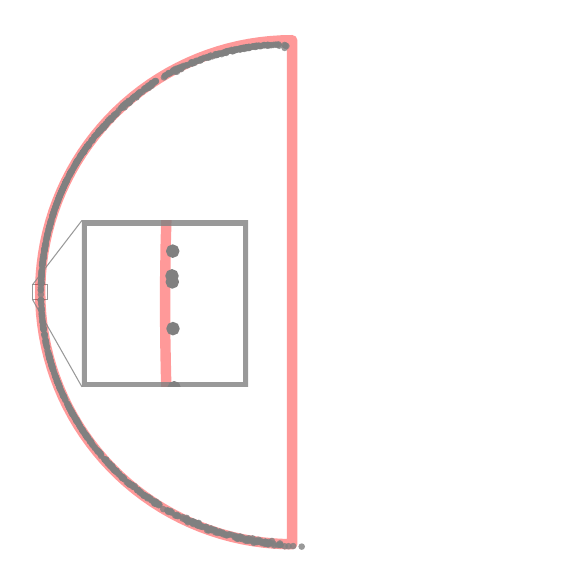}
  \caption{PIDM (baseline)}
\end{subfigure}
\hfill
\begin{subfigure}[b]{.32\textwidth}
  \centering
  \includegraphics[width=.85\linewidth,trim=0 15 0 15,clip]{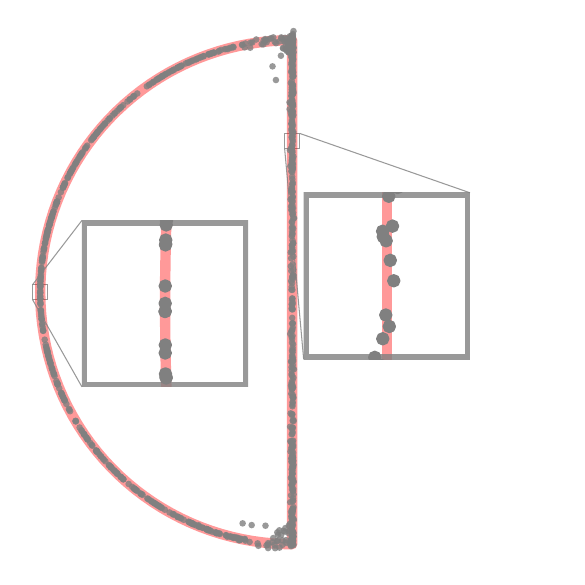}
  \caption{SCD (ours)}
\end{subfigure}
\caption{Additional samples from the ``chop'' example. }
\label{fig:chops}
\end{figure}

\end{document}